\documentclass[twoside,11pt]{article}

\usepackage{jmlr2e}
\usepackage{amsmath}
\usepackage{bm}
\usepackage{booktabs}
\usepackage{multirow}
\usepackage{microtype}
\usepackage{enumitem}
\usepackage{lastpage}
\usepackage{float}

\newtheorem{assumption}[theorem]{Assumption}

\newcommand{\E}{\mathbb{E}}
\newcommand{\Prob}{\mathbb{P}}

\newcommand{\ind}[1]{\mathbf{1}\{#1\}}
\newcommand{\eps}{\varepsilon}
\newcommand{\kl}{\mathrm{kl}}
\newcommand{\KL}{\mathrm{KL}}

\newcommand{\filt}{\mathcal{F}}
\newcommand{\Lb}{L_{\delta}}
\newcommand{\discern}{\textsc{Discern}}

\jmlrheading{}{2026}{}{}{}{}{Vishnu Bindu Balachandran}
\ShortHeadings{Pay Only for Disagreement}{Vishnu Bindu Balachandran}
\firstpageno{1}

\begin{document}

\title{Pay Only for Disagreement: Certified No-Regression Verdicts for
Model Updates with Matching Label-Complexity Bounds}

\author{\name Vishnu Bindu Balachandran
        \email vishnubindubalachandran@outlook.com \\
       \addr Independent Researcher \\
       \addr www.vishnubindubalachandran.com}

\editor{}

\maketitle

\begin{abstract}%
Every production model is updated, by retraining, fine-tuning, quantization,
or a silent vendor swap, and each update risks being worse than what it
replaced. We formalize update promotion as certified paired risk-difference
auditing. Our starting point is a support identity. The risk difference
between two models lives on the inputs where they disagree, observable
without labels. We build \discern{}, a sequential two-tier protocol. A
zero-label tier certifies benign updates whose disagreement rate is below
tolerance from unlabeled traffic alone. An audited tier labels only sampled
disagreements through an anytime-valid confidence sequence, valid at every
stopping time and under any label-routing rule, even an adversarial judge. We
prove finite-sample validity and matching label-complexity bounds
$\widetilde{\Theta}(\rho^2/\eps^2)$ at the rate level, so exploiting free
disagreement provably saves a factor $\Theta(1/\rho)$ over any pairing-blind
auditor, and the guarantee composes across an unbounded sequence of
promotions from one error budget. Across $14{,}000$+ replayed audit streams
over $785$ update pairs, including LoRA fine-tunes of language models up to
1.4B parameters, miscoverage is $0.0002$ (nominal $5\%$), power $0.986$ with
zero false alarms, and $56\%$ of benign updates certify with zero labels.
Each audit emits a machine-checkable evidence record for EU AI Act
post-market monitoring.
\end{abstract}

\begin{keywords}
  model updates, anytime-valid inference, label complexity, confidence
  sequences, deployment monitoring
\end{keywords}

\section{Introduction}
\label{sec:intro}

Machine learning systems in production are not static artifacts. Teams retrain
on fresh data, adjust hyperparameters, fine-tune on new domains, quantize
weights for cheaper inference, distill large models into small ones, and
absorb silent upstream changes when a managed API provider swaps its backbone.
Each such event replaces a deployed model $f$ with a candidate $g$, and each
replacement carries the same operational risk, since the update may quietly
be worse than what it replaced, either overall or on a slice of traffic that
matters.
Post-deployment incident reports and the model-update literature on churn and
backward compatibility document this failure mode repeatedly
\citep{fard2016launch,yan2021positive,srivastava2020empirical}.

Despite the ubiquity of the event, the promotion decision is statistically
informal in most organizations. The dominant practices are (i) fixed offline
test suites, which cannot reflect current traffic and are exhausted by
repeated use, (ii) A/B tests on business metrics, which are slow, costly in
exposure, and do not certify model correctness, and (iii) uniform labeling
campaigns on sampled traffic, which are statistically sound but pay for an
enormous number of labels that carry no information about the update. The
third point is the crux. If the old and new model agree on a traffic point,
that point contributes exactly zero to the risk difference between them, no
matter what the true label is. Labels spent on agreements are wasted, and
under realistic updates the models agree on $80\%$ to $99.9\%$ of traffic.

This paper treats the promotion decision as a first-class statistical object
with certified verdicts, matching label-complexity bounds, and lifetime
soundness across a sequence of updates. We study the paired risk difference
$\Delta = \E[\ell(g(X),Y)] - \E[\ell(f(X),Y)]$
between candidate $g$ and incumbent $f$, signed so that $\Delta > 0$ means
the update is a regression. The operator wants one of two certified
verdicts at tolerance $\eps$ and confidence $1-\delta$, either a no-regression
certificate asserting $\Delta < \eps$, or a regression alarm asserting
$\Delta > 0$. The audit must run on live traffic, must remain valid when the
operator peeks continuously and stops as soon as evidence suffices, and must
spend as few human labels as possible.

\paragraph{The support identity and its consequences.}
Everything follows from one elementary observation. Write
$A = \ind{f(X)\neq g(X)}$ for the disagreement indicator, observable for free
because both models already score the traffic. For any loss that depends on
the input only through the prediction, agreement forces identical losses, so
the paired difference $D$ vanishes whenever $A=0$ and
\begin{equation}
\label{eq:support}
\Delta \;=\; \E[D\,A],
\qquad
|\Delta| \;\le\; B\,\rho,
\qquad \rho := \Prob(A=1),
\end{equation}
where $B$ bounds the loss range. Identity~\eqref{eq:support} has two
consequences that we develop into a complete theory. First, if unlabeled
traffic certifies that $\rho \le \eps/B$, then $|\Delta|\le\eps$ is certified
with zero labels. Quantized builds, routine retrains, and hyperparameter
touches typically live in this regime, and in our experiments $98\%$ of such
benign updates certify label-free. Second, when $\rho$ is not negligible, all
information about $\Delta$ is concentrated on disagreements, so an auditor
should label only there. Figure~\ref{fig:overview} overviews the resulting
protocol. We make this precise with an importance-weighted
estimator wrapped in an empirical-Bernstein confidence sequence, and we prove
it is not merely a good idea but a rate-optimal one. Its label complexity matches
an information-theoretic lower bound up to logarithmic factors, and the gap to any pairing-blind
procedure is exactly the disagreement rate itself.

A scope note belongs up front. The support identity requires the loss to
depend on the input only through the prediction (Assumption~\ref{ass:pred}),
which holds for the standard supervised losses and for the classification
fine-tunes in our experiments, but excludes losses that read model scores or
a judge's verdict, including calibration and preference or reward losses. Our
certified guarantees are for that prediction-determined class. A soft
generalization to score-reading losses that are Lipschitz in the model
output, which reaches calibration and smooth reward objectives, is developed
in Section~\ref{sec:soft}; open-ended judged generation remains outside the
theory and we do not claim it.

\begin{figure}[t]
\centering
\includegraphics[width=\textwidth]{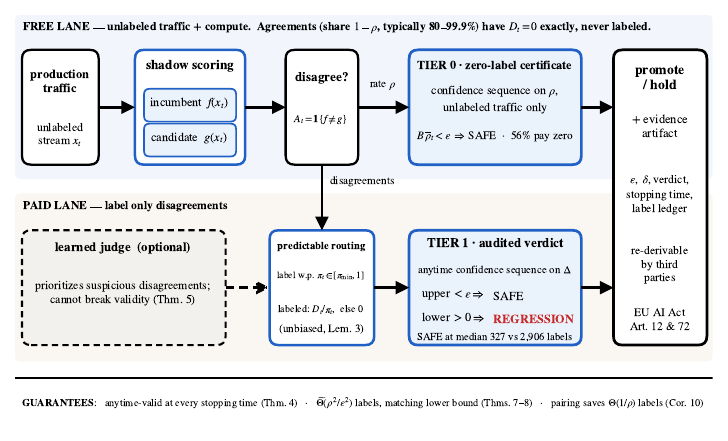}
\caption{The \discern{} protocol, organized by what it spends. In the free
lane, both models shadow-score unlabeled traffic and the disagreement
indicator stream (both branches of the gate) drives a zero-label
confidence sequence on $\rho$, and when $B\overline\rho_t<\eps$ the update
is certified safe without a single label (Tier 0). Only sampled
disagreements enter the paid lane, through a predictable routing rule with
importance-weighted increments, and an anytime-valid confidence sequence
on $\Delta$ issues the verdict (Tier 1). Every quantity shown is measured
in Section~\ref{sec:experiments}. In brief, $56\%$ of benign audits certify with
zero labels, the audited band pays a median of $327$ versus $2{,}906$
labels for uniform labeling, and injected regressions alarm at a median of
$964$ stream points. The footer lists the paper's formal guarantees.}
\label{fig:overview}
\end{figure}

\paragraph{Contributions.}
\begin{enumerate}[leftmargin=1.6em,itemsep=1pt,topsep=2pt]
\item \textbf{Problem formalization.} We cast update promotion as certified
paired risk-difference auditing with anytime-valid semantics
(Section~\ref{sec:setting}), an object distinct from single-model risk
certification, two-sample testing, and offline evaluation. A soft support
identity (Proposition~\ref{prop:soft}) extends the whole protocol from
prediction-determined losses to losses Lipschitz in the model output, such
as calibration and smooth reward losses.
\item \textbf{The \discern{} protocol.} A two-tier sequential audit
(Section~\ref{sec:protocol}) with a zero-label certificate from unlabeled
disagreement monitoring, and an audited tier that labels sampled
disagreements under any predictable routing rule, with per-slice
simultaneous verdicts and family-wise soundness across a whole sequence
of promotions (Proposition~\ref{prop:sequence}).
\item \textbf{Validity theory.} Time-uniform $(1-\delta)$ coverage of the
audited estimate at every data-dependent stopping time, for every predictable
routing rule, judge-driven or adversarial (Theorem~\ref{thm:validity},
Theorem~\ref{thm:routing}).
\item \textbf{Matching label-complexity bounds.} An upper bound of
$\widetilde O\!\big(B^2\rho^2/\eps^2 + B\rho/\eps\big)$ labels to an
$(\eps,\delta)$ verdict (Theorem~\ref{thm:upper}) and a change-of-measure
lower bound $\Omega(\rho^2\eps^{-2}\log(1/\delta))$ for any adaptive
labeling-and-stopping procedure (Theorem~\ref{thm:lower}), matching up to
logarithmic factors. The paired comparison object goes back at least to
\citet{sawade2012active}, and certified update approval to
\citet{feng2021approval}. What is new here, to our knowledge, is the
characterization of the label cost of a certified update verdict as a
function of the disagreement rate, with matching bounds.
\item \textbf{The value of pairing.} Any procedure that cannot see the
disagreement signal needs $\Omega(\rho\,\eps^{-2}\log(1/\delta))$ labels
(Theorem~\ref{thm:blind}), and uniform labeling achieves it. Pairing
therefore buys a factor $\Theta(\rho)$, a compute-for-labels exchange in which
running both models on unlabeled traffic, which is cheap, eliminates a
$1/\rho$ multiple of human labeling, which is expensive.
\item \textbf{Large-scale validation.} A pre-registered evaluation over
$715$ feature-space update pairs and $6{,}890$ replayed audit streams across
four feature modalities and eleven update types, plus $70$ real LoRA
fine-tune update pairs ($140$ further streams) of Pythia language models,
with a held-out confirmation on a reserved seed family
(Section~\ref{sec:experiments}). Empirical miscoverage is
$0.0002$ (nominal $5\%$), power is $0.986$ with zero false alarms, and label
spend follows the theoretical law with $R^2 = 0.999$. Head-to-head
comparisons against sequential adaptations of prediction-powered and
active-testing baselines on identical streams
(Section~\ref{sec:exp-baselines}), and a windowed variant that repairs
calibration under adversarial drift (Section~\ref{sec:exp-drift}), complete
the empirical case.
\end{enumerate}

\paragraph{Why this is the right target for a certified method.}
A certificate is only as valuable as the decision it unlocks. The decision
here is concrete and recurring, namely whether to promote or hold an update.
Regulation is converging on the same object. Article 72 of the EU AI Act requires providers
of high-risk systems to run post-market monitoring proportionate to the risk,
and logging obligations under Article 12 require traceable evidence for
system changes. A \discern{} audit emits, per promotion, a machine-checkable
tuple (tolerance, confidence, verdict, label spend, stopping time, per-slice
outcomes) that serves directly as such evidence. No comparable
finite-sample instrument exists in the update-monitoring literature, which
either assumes labeled evaluation sets, provides asymptotic or heuristic
monitoring, or certifies single-model properties rather than the paired
difference that the promotion decision actually consumes
(Section~\ref{sec:related}).

\section{Related Work}
\label{sec:related}

\paragraph{Paired comparison of classifiers.}
The observation that paired accuracy comparison hinges on discordant pairs
goes back to \citet{mcnemar1947note}, and sequential treatments of two-sample
testing are classical \citep{wald1945sequential}. Restricting adjudication to
discordant pairs to save labels recurs in fixed-sample model evaluation,
where \citet{musgrove2023discordant} report over a $90\%$ reduction in
adjudications, but under an assumed class prevalence and with neither a
sequential guarantee nor a lower bound. Our contribution is not the
identity itself but its consequences under modern constraints, namely anytime
validity under continuous monitoring, adaptive label routing with
importance-weighting, matching label-complexity bounds, and a zero-label
tier. Recent resolution diagnostics for paired evaluation
\citep{kotawala2026resolution} quantify whether a fixed-$n$ comparison has
enough labeled points to resolve, but assume the labels already exist. We
optimize and certify the labeling process itself.

\paragraph{Anytime-valid inference.}
Our confidence sequences build on time-uniform boundaries from
nonnegative-supermartingale arguments \citep{ville1939etude,robbins1970},
modern empirical-Bernstein confidence sequences
\citep{howard2021time,waudby2024estimating}, and the game-theoretic testing
programme \citep{ramdas2023game}. Betting-style audits have been used for
fairness properties of a fixed model \citep{chugg2023auditing} and adaptive
audits of a single AI system with anytime-valid guarantees
\citep{zhou2026adaptive}. Sequential drift attribution for LLM evaluation
pipelines \citep{li2026drifted} monitors a judged pipeline over time. Closest
in mechanism, \citet{karampatziakis2021offpolicy} build off-policy confidence
sequences with importance weights inside a betting martingale and apply them
to gated deployment of a contextual-bandit system, and
\citet{xu2024active} construct anytime-valid risk-controlling prediction sets
under active labeling with a query budget. These share our core machinery,
Horvitz--Thompson weights within an anytime-valid betting confidence
sequence under adaptive querying. None of these anytime-valid audits target
the paired update delta, none exploit the disagreement support to drive a
zero-label tier, and none characterize label complexity or prove a matching
lower bound. We use the
confidence-sequence machinery as a component and prove rate-matching
complexity results for the promotion problem.

\paragraph{Model comparison and update approval.}
Two prior lines come closest to our object. \citet{sawade2012active} compare the risks of two given models on a fixed labeling budget by sampling test instances from a variance-optimal instrumental distribution, the paired object with active, importance-weighted labeling. Their guarantee is an asymptotic normal-approximation test at a fixed horizon. We add what deployment needs and what their analysis does not provide, namely finite-sample anytime-valid verdicts, an explicit label-complexity characterization with a matching lower bound, and a zero-label tier. Notably, their optimal instrumental distribution already concentrates on informative points, and our Theorem~\ref{thm:lower} can be read as the information-theoretic reason why. Separately, \citet{feng2021approval} and \citet{feng2021learning} study approval policies for modifications to deployed ML systems as an online hypothesis testing problem with error control across a sequence of approvals, including the bio-creep phenomenon. That line certifies sequences of approvals on labeled test batches. We certify each promotion from unlabeled traffic at optimal label cost, our audits compose across a sequence of promotions with family-wise control of bio-creep (Proposition~\ref{prop:sequence}), and their alpha-investing refinement ports directly. \citet{podkopaev2022tracking} track the risk of a single deployed model with anytime-valid guarantees, and \citet{shanmugam2025evaluating} estimate metrics for multiple models from labeled and unlabeled data without certified verdicts. Neither targets the paired promotion decision. Concurrent with this work, \citet{shawn2026pace} propose a McNemar-style anytime-valid commit gate for self-evolving agents, using a testing-by-betting e-process on paired instances to control the false-commit rate. That design shares our anytime-valid, paired, evidence-on-disagreements structure, and reaches similar conclusions about uncontrolled greedy acceptance, but provides neither a zero-label certificate, nor a label-complexity characterization, nor a matching lower bound, which are the central objects here.

\paragraph{Label-efficient evaluation.}
Active testing and model-assisted evaluation reduce labels for estimating a
single model's risk \citep{kossen2021active,angelopoulos2023prediction}, with
sequential and active variants of two-sample testing in
\citet{li2023active,shekhar2501label}. Prediction-powered and surrogate-based
estimators use model outputs to sharpen single-quantity inference
\citep{angelopoulos2023prediction,zrnic2024active}, and
\citet{boyeau2025autoeval} apply prediction-powered inference to pairwise
model comparison from synthetic labels. These are fixed-horizon estimators
of a metric, not certified anytime-valid promotion verdicts, and none
provides a label-complexity lower bound. The estimand differs in a way that
changes the complexity theory. For the paired difference, an observable
signal (disagreement) supports the entire estimand, which makes the optimal
label complexity quadratic rather than linear in $\rho$ and creates the
$\Theta(\rho)$ separation we prove. Cross-model disagreement has been
proposed as a label-free correctness heuristic \citep{gorbett2026cross} and
disagreement-aware aggregate evaluation appears in \citet{bonagiri2026stableval}.
These works use disagreement as a signal for prediction quality, not as the
support of a certified estimand, and provide no finite-sample guarantees.

\paragraph{Disagreement-based active learning.}
The principle that labels are informative only where competing hypotheses
disagree is the foundation of disagreement-based active learning, from the
query-by-disagreement of \citet{cohn1994improving} to the agnostic $A^2$
algorithm of \citet{balcan2009agnostic} and its analysis through Hanneke's
disagreement coefficient \citep{hanneke2007bound,hanneke2014theory}. That
theory bounds the labels needed to \emph{learn} a low-error classifier by
the rate at which the disagreement region of a shrinking version space
contracts around the target hypothesis. Our problem differs in both object
and mathematics. We learn no classifier and maintain no version space. We
hold two fixed given models and certify a scalar functional, the paired
risk difference, with an anytime-valid guarantee. The disagreement region
is fixed at the observable rate $\rho$ rather than shrinking with a
disagreement coefficient, and our tolerance $\eps$ bounds a risk difference
rather than the excess error of a learned hypothesis. Consequently our
$\widetilde\Theta(\rho^2/\eps^2)$ characterization is a mean-estimation and
sequential-testing complexity for a functional, proved by a
change-of-measure construction (Theorem~\ref{thm:lower}), not a
version-space label-complexity bound, and neither result subsumes the
other. What the two share is the operational core we take as our starting
point, that information concentrates on disagreements. The bound
$|\Delta|\le B\rho$ of Lemma~\ref{lem:support} is elementary and is the
evaluation-side counterpart of the classical disagreement-region bound. The
idea of using model disagreement on unlabeled data to validate and compare
classifiers is older still, going back to the co-validation of
\citet{madani2004covalidation}, who showed that disagreement lower-bounds
error and can compare models without labels, and to the unlabeled-data
generalization bounds of \citet{kaariainen2005generalization}, which bound a
risk gap by an unlabeled disagreement probability. Concurrent work of
\citet{bazinet2026bound} likewise certifies the gap between two predictors'
risks by evaluating a disagreement bound on unlabeled data, for static
generalization certificates rather than a sequential update verdict. Our
contribution is not this bound but its sequential certification, the
zero-label tier it enables, and the matching two-sided label-complexity
theory built around it.

\paragraph{Update churn and backward compatibility.}
The applied literature documents the operational pain of updates, from
prediction churn between successive models \citep{fard2016launch} to negative
flips and backward-compatibility metrics \citep{yan2021positive} and
regression testing practice for ML systems \citep{srivastava2020empirical}.
These works measure or reduce disagreement. We complete the pipeline. Given
that updates disagree, we certify, at optimal label cost, whether the
disagreement hurts.

\paragraph{Model multiplicity.}
Predictive multiplicity \citep{marx2020predictive,black2022model} studies
distinct models with similar average risk. The disagreement region between
near-equivalent models is exactly where our audits concentrate, and
multiplicity explains why the benign-update regime ($\Delta \approx 0$,
$\rho$ moderate) is common and why certifying it cheaply matters.

\section{Setting and the Support Identity}
\label{sec:setting}

Let $(X_t, Y_t)_{t\ge1}$ be an i.i.d.\ stream from an unknown distribution
$P$ over $\mathcal X\times\mathcal Y$ (drift is treated in
Section~\ref{sec:drift}). An incumbent model $f$ and a candidate $g$ map
$\mathcal X$ to a prediction space $\mathcal A$, and a loss
$\ell : \mathcal A\times\mathcal Y \to [0, B]$ scores predictions. Define the
paired difference and its mean
\begin{equation}
D_t \;=\; \ell\big(g(X_t), Y_t\big) - \ell\big(f(X_t), Y_t\big) \in [-B, B],
\qquad
\Delta \;=\; \E[D_t],
\end{equation}
so $\Delta > 0$ means the candidate is worse (a regression) and $\Delta \le 0$
means it is no worse. For $0/1$ loss, $\Delta$ is the accuracy drop and
$B=1$. The auditor observes both predictions on every point for free, hence
also the disagreement indicator $A_t = \ind{f(X_t)\neq g(X_t)}$ with rate
$\rho = \Prob(A_t = 1)$, and may request the label $Y_t$ at unit cost.

\begin{assumption}[Prediction-determined loss]
\label{ass:pred}
The loss is a function of the prediction and label alone, $\ell(a,y)$, with
no other dependence on the input. Its operative content is that whenever the
two models make the same prediction, $f(X_t)=g(X_t)$, they incur the same
loss, so $D_t=0$.
\end{assumption}

Assumption~\ref{ass:pred} holds for every standard supervised loss ($0/1$,
cost-sensitive misclassification, top-$k$, bounded regression losses on
point predictions, exact-match and slot-level scores for structured
prediction). It fails only when the loss consults model internals, for
example calibration losses on confidence scores. Every guarantee in this
paper is for this deterministic-prediction class. Judged and
preference-based losses, including chat-quality and RLHF objectives, are
outside it, and our language-model experiments are classification
fine-tunes that satisfy Assumption~\ref{ass:pred}
(Section~\ref{sec:zoo}). We discuss the extension in
Section~\ref{sec:limitations}.

\begin{lemma}[Support identity]
\label{lem:support}
Under Assumption~\ref{ass:pred},
$\Delta = \E[D_t A_t]$ and $|\Delta| \le B\rho_1 \le B\rho$, where
$\rho_1 = \Prob(D_t \neq 0)$.
\end{lemma}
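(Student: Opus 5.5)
The plan is a pointwise argument followed by taking expectations, carried out in three short steps.

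First, I show the pointwise identity $D_t = D_t A_t$ almost surely. On the event $\{A_t = 0\}$ we have $f(X_t) = g(X_t)$. Assumption~\ref{ass:pred} says $\ell$ depends on the input only through the prediction, so $\ell(g(X_t),Y_t) = \ell(f(X_t),Y_t)$ and hence $D_t = 0 = D_t A_t$. On $\{A_t=1\}$ the identity is trivial. The same argument gives the event inclusion $\{D_t \neq 0\} \subseteq \{A_t = 1\}$, which I will need in the last step.

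Second, I take expectations. Since $D_t \in [-B,B]$ is bounded, every expectation is finite, and $\Delta = \E[D_t] = \E[D_t A_t]$ follows at once. Measurability of $A_t$ only requires the set $\{(a,a')\colon a\neq a'\}$ to be measurable in $\mathcal A\times\mathcal A$. I will state this as part of the standing setup, since $A_t$ is already treated as an observable random variable there.

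Third, I prove the inequality chain. Because $\ell \in [0,B]$, the difference satisfies $|D_t| \le B$, and therefore $|D_t| \le B\,\ind{D_t\neq 0}$ pointwise. Taking expectations gives
\[
|\Delta| \le \E|D_t| \le B\,\Prob(D_t\neq 0) = B\rho_1 .
\]
The inclusion from the first step then gives $\rho_1 = \Prob(D_t \neq 0) \le \Prob(A_t=1) = \rho$, which completes $|\Delta| \le B\rho_1 \le B\rho$. Note that routing the bound through $\ind{D_t\neq0}$ rather than $A_t$ is what produces the sharper middle term $B\rho_1$.

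There is no real obstacle. The only point needing care is interpreting Assumption~\ref{ass:pred} correctly. Equal predictions must imply equal losses for the same label $Y_t$, which holds because both losses are evaluated on the same realized pair $(X_t,Y_t)$. Any stochasticity in the predictions would have to be absorbed into $X_t$ for this to hold.
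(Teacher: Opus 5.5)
Your proof is correct and follows the paper's own argument: the pointwise identity $D_t = D_tA_t$ from Assumption~\ref{ass:pred}, then expectations, then $|D_t|\le B$ on $\{D_t\neq 0\}\subseteq\{A_t=1\}$. Your explicit chain $|\Delta|\le\E|D_t|\le B\,\Prob(D_t\neq 0)$ spells out the step that the paper's one-line proof leaves implicit.
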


\begin{proof}
$D_t = D_t A_t$ pointwise since $A_t = 0$ forces $D_t = 0$. Take
expectations, then bound $|D_t|\le B$ on the event $\{D_t \ne 0\}
\subseteq \{A_t = 1\}$.
\end{proof}

The identity is elementary, and the bound $|\Delta|\le B\rho$ is the
evaluation-side form of the disagreement-region principle that underlies
disagreement-based active learning
\citep{hanneke2014theory}. What is not classical, and what the rest of the
paper develops, is its sequential certification from unlabeled traffic
and the matching label-complexity theory it implies
(Section~\ref{sec:related} contrasts the two settings).
The refinement through $\rho_1$ matters in multiclass problems where both
models can disagree yet both be wrong, contributing $A_t = 1$ but $D_t = 0$.
All upper bounds below can be read with $\rho_1$ in place of $\rho$ wherever
variance is concerned. We state them with $\rho$ for simplicity since
$\rho_1\le\rho$ and only $\rho$ is observable without labels.

\paragraph{Verdicts.}
Fix a tolerance $\eps\in(0,B)$ and confidence level $\delta\in(0,1)$. At any
data-dependent time the auditor may issue one of two verdicts.
\begin{itemize}[leftmargin=1.6em,itemsep=0pt,topsep=2pt]
\item \textsc{Safe}$(\eps)$, which asserts $\Delta < \eps$ (promote the update).
\item \textsc{Regression}, which asserts $\Delta > 0$ (block or roll back).
\end{itemize}
\textsc{Safe}$(\eps)$ is a tolerance statement in the spirit of noninferiority testing. It asserts $\Delta<\eps$ rather than $\Delta\le 0$, and we use no-regression in this $\eps$-tolerance sense throughout. A procedure is $(\eps,\delta)$-sound if, with probability at least
$1-\delta$, no false verdict is ever issued at any time. Soundness at every
stopping time is not optional in deployment, because promotion pipelines poll
the audit continuously and act the moment a verdict fires, which is precisely
the regime in which fixed-$n$ tests lose their guarantees
\citep{ramdas2023game}.

\section{The \discern{} Protocol}
\label{sec:protocol}

\discern{} (DISagreement-supported CERtificatioN)\footnote{The name is unrelated to the DISCERN instrument for appraising written consumer health information \citep{charnock1999discern}.} runs two tiers on the same
stream. Both tiers are driven by confidence sequences (CSs), which are
sequences of intervals $C_t$ with
$\Prob(\exists t : \theta \notin C_t)\le\delta$ for the
target parameter $\theta$ \citep{howard2021time}. We use the
predictable-plug-in empirical-Bernstein CS of \citet{waudby2024estimating},
whose radius adapts to the empirical variance of the increments. Its exact
form is recalled in Appendix~\ref{app:cs}.

\subsection{Tier 0, the zero-label certificate}
\label{sec:tier0}

The indicator stream $(A_t)$ is observed without labels. Maintain a CS
$[\,\underline\rho_t, \overline\rho_t\,]$ for $\rho$ at level $\delta_0$.
By Lemma~\ref{lem:support}, on the CS's coverage event,
\begin{equation}
\label{eq:tier0}
B\,\overline\rho_t \;<\; \eps
\quad\Longrightarrow\quad
|\Delta| \;<\; \eps ,
\end{equation}
so the auditor issues \textsc{Safe}$(\eps)$ having requested zero labels.
The firing threshold is the strict inequality $B\overline\rho_t<\eps$
precisely so the certificate implies the strict claim $\Delta<\eps$ that
\textsc{Safe}$(\eps)$ asserts, with no slippage at the boundary $\Delta=\eps$.
Tier 0 certifies exactly the updates that dominate production practice, namely
quantized builds, retrains that converge to near-identical decisions, and
small hyperparameter changes. When $\rho \le \eps/(2B)$, the certificate
fires after $O\!\big(B\log(1/\delta_0)/\eps\big)$ unlabeled points
(Proposition~\ref{prop:tier0}). The constant $B$ is a design quantity
rather than an estimate. For $0/1$ and other normalized losses $B=1$ by
construction, and for cost-sensitive losses it is the known maximum cost.
The certificate uses only the one-sided bound $\Delta\le B\rho$, and
overestimating $B$ degrades it linearly, since the certificate fires once
$\overline\rho_t<\eps/B$.

\subsection{Tier 1, auditing the disagreement region}
\label{sec:tier1}

When Tier 0 cannot fire, information must be bought, and
Lemma~\ref{lem:support} says to buy it only on disagreements. At each point
with $A_t=1$ the auditor requests the label with probability $\pi_t$, a
predictable routing rule. It may depend on everything observed
before $t$ and on side information about $X_t$ itself, including scores from
a learned judge model, subject only to $\pi_t \in [\pi_{\min}, 1]$. Routing
may never consult the incoming label $Y_t$. This is the independence
condition of Lemma~\ref{lem:unbiased}, and it is the one deployment
requirement. It is a genuine assumption, not a formality. A pipeline that
decides to label a point \emph{because} the outcome already looked bad,
for example queuing an example for review after a user complaint, makes
$L_t$ depend on $Y_t$ and biases the Horvitz--Thompson estimator, voiding
the guarantee. The safeguard is operational and simple, namely that the
routing rule must be a function of pre-label information only ($\filt_{t-1}$
and $X_t$, including judge scores), with the label drawn only after the
query decision is fixed. Under that discipline the condition holds by
construction. Let
$L_t\sim\mathrm{Bern}(\pi_t)$ be the labeling indicator. The
importance-weighted (Horvitz--Thompson) increment
\begin{equation}
\label{eq:ht}
Z_t \;=\; A_t \,\frac{L_t}{\pi_t}\, D_t \;\in\; [-c, c],
\qquad c := B/\pi_{\min},
\end{equation}
satisfies $\E[Z_t\mid \filt_{t-1}] = \Delta$ (Lemma~\ref{lem:unbiased}).
\discern{} maintains the empirical-Bernstein CS
$[\,\underline\Delta_t, \overline\Delta_t\,]$ for $\Delta$ at level
$\delta_1$ over the increments $Z_t$ (agreements contribute exact zeros and
consume no labels) and issues
\[
\textsc{Safe}(\eps) \text{ when } \overline\Delta_t < \eps,
\qquad
\textsc{Regression} \text{ when } \underline\Delta_t > 0 .
\]
Setting $\delta_0 + \delta_1 = \delta$ makes the combined procedure
$(\eps,\delta)$-sound (Theorem~\ref{thm:validity}). Three design points
deserve emphasis.

\paragraph{Judges accelerate, never invalidate.}
A learned judge (an auxiliary model scoring which disagreements look
suspicious) can drive $\pi_t$ to concentrate labels where regressions would
show. Because validity requires only predictability and the floor
$\pi_{\min}$, a judge that is stale, biased, or adversarial can waste budget
but cannot break coverage (Theorem~\ref{thm:routing}). This resolves the
central tension of LLM-as-judge evaluation. The judge is used for
efficiency, the guarantee never depends on its quality.

\paragraph{Per-slice verdicts.}
Promotion decisions are rarely global. Partition traffic into $K$ slices
(regions, languages, document types) and run one CS per slice with budget
$\delta_1/K$. Union coverage gives simultaneous validity of all $K$ verdicts
(Proposition~\ref{prop:slices}), enabling verdicts of the form ``safe
overall, regressing on slice $k$'' with localization guarantees.

\paragraph{Compute cost.}
\discern{} runs both models on audited traffic. This shadow-scoring cost is
standard in deployment (canary and shadow testing already do it) and is the
resource the protocol deliberately trades against labels: Theorem~\ref{thm:blind}
shows the trade buys a $1/\rho$ label reduction.

\section{Theory}
\label{sec:theory}

Throughout, $\Lb := \log(2/\delta_1)$ and $\widetilde O(\cdot)$ hides factors
polylogarithmic in $t$, $1/\eps$, and $\log(1/\delta)$. Proofs are in
Appendix~\ref{app:proofs}.

\subsection{Validity}

\begin{lemma}[Conditional unbiasedness]
\label{lem:unbiased}
Let $\pi_t$ be $\filt_{t-1}\vee\sigma(X_t)$-measurable with
$\pi_t\ge\pi_{\min}>0$, and let $L_t\mid \filt_{t-1}, X_t, Y_t \sim
\mathrm{Bern}(\pi_t)$ be drawn independently of $Y_t$ given $(\filt_{t-1},
X_t)$. Then the increment~\eqref{eq:ht} satisfies
$\E[Z_t \mid \filt_{t-1}] = \Delta$ and $|Z_t| \le B/\pi_{\min}$.
\end{lemma}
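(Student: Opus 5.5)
The proof is a tower-property computation. The idea is to condition first on $\filt_{t-1}\vee\sigma(X_t,Y_t)$, integrate out the label indicator, and then use the i.i.d.\ structure of the stream together with Lemma~\ref{lem:support}.

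\textbf{Boundedness.} This is immediate from the definitions. We have $A_t\in\{0,1\}$, $L_t\in\{0,1\}$, $|D_t|\le B$ because $\ell$ takes values in $[0,B]$, and $\pi_t\ge\pi_{\min}$. Hence $|Z_t|\le B/\pi_{\min}$ pointwise. On $\{A_t=0\}$ we simply have $Z_t=0$, so there is no issue if $\pi_t$ is only defined on disagreements.

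\textbf{Integrating out $L_t$.} Write $\mathcal G_t := \filt_{t-1}\vee\sigma(X_t,Y_t)$. The quantities $A_t$ and $D_t$ are $\sigma(X_t,Y_t)$-measurable, and $\pi_t$ is $\filt_{t-1}\vee\sigma(X_t)$-measurable. All three are therefore $\mathcal G_t$-measurable and can be pulled out of the conditional expectation:
\[
\E[Z_t\mid\mathcal G_t]=A_t\,\frac{D_t}{\pi_t}\,\E[L_t\mid\mathcal G_t].
\]
The hypothesis $L_t\mid\filt_{t-1},X_t,Y_t\sim\mathrm{Bern}(\pi_t)$ gives $\E[L_t\mid\mathcal G_t]=\pi_t$, so $\E[Z_t\mid\mathcal G_t]=A_tD_t$. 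This is the step where the no-peeking condition on $Y_t$ is used: without it, $\E[L_t\mid\mathcal G_t]$ could depend on $Y_t$ and the factor $\pi_t$ would not cancel.

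\textbf{Removing the conditioning on the current point.} By the tower property, $\E[Z_t\mid\filt_{t-1}]=\E[A_tD_t\mid\filt_{t-1}]$. Here we need the requirement that $(X_t,Y_t)$ is independent of $\filt_{t-1}$. The filtration $\filt_{t-1}$ is generated by past data, past labeling coins and past judge randomness, and the current point is a fresh i.i.d.\ draw, so this independence holds. It gives $\E[A_tD_t\mid\filt_{t-1}]=\E[A_tD_t]$, and Lemma~\ref{lem:support} identifies this as $\Delta$. Integrability is not a concern, since every quantity is bounded by the first step.

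The main obstacle is bookkeeping rather than analysis. One must state precisely what $\filt_{t-1}$ contains, so that the independence of $(X_t,Y_t)$ from it is justified, and check that any external randomness feeding $\pi_t$ (for example an adversarial judge) is included in $\filt_{t-1}$ or $\sigma(X_t)$. I would handle this by fixing $\filt_{t-1}=\sigma\bigl((X_s,Y_sL_s,L_s,U_s)_{s<t}\bigr)$, where $U_s$ denotes auxiliary judge randomness independent of future data, and then noting that both measurability claims hold.
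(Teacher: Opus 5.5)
Your proof is correct and takes essentially the paper's route: integrate out the labeling coin with the tower property, then use that the current point is fresh and apply Lemma~\ref{lem:support} to get $\E[A_tD_t]=\Delta$, with the range bound read off pointwise. The one difference is cosmetic. You condition on $\filt_{t-1}\vee\sigma(X_t,Y_t)$, which makes $D_t$ measurable and lets you apply the Bernoulli hypothesis directly. The paper instead conditions on $(\filt_{t-1},X_t)$ and factorizes $\E[L_tD_t\mid\cdot\,]$ through the conditional independence of $L_t$ and $Y_t$, and it leaves implicit the independence of $(X_t,Y_t)$ from $\filt_{t-1}$ that you state explicitly.
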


\begin{theorem}[Anytime validity]
\label{thm:validity}
Run Tier 0 at level $\delta_0$ and Tier 1 at level $\delta_1$ with any
predictable routing $\pi_t\in[\pi_{\min},1]$. Then with probability at least
$1-(\delta_0+\delta_1)$, simultaneously for all $t\ge1$:
$\rho\in[\underline\rho_t,\overline\rho_t]$ and
$\Delta\in[\underline\Delta_t,\overline\Delta_t]$. Consequently no false
\textsc{Safe}$(\eps)$ or \textsc{Regression} verdict is ever issued, at any
stopping time, under continuous monitoring.
\end{theorem}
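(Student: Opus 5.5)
The argument has three parts: coverage of each confidence sequence on its own, a union bound over the two tiers, and a deterministic step showing that coverage rules out false verdicts.

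\textbf{Tier 0.} The indicators $A_t$ are i.i.d.\ Bernoulli$(\rho)$ with values in $[0,1]$, and each $A_t$ is adapted to the filtration generated by the shadow-scored stream. The empirical-Bernstein confidence sequence of \citet{waudby2024estimating} (Appendix~\ref{app:cs}) applies to bounded increments with a common conditional mean. It therefore gives $\Prob(\exists t:\rho\notin[\underline\rho_t,\overline\rho_t])\le\delta_0$. Call the complementary event $E_0$.

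\textbf{Tier 1.} We take $\filt_t$ to be the $\sigma$-field generated by $(X_s,A_s,\pi_s,L_s,L_sY_s)_{s\le t}$, together with any judge randomness. Then $Z_t$ is $\filt_t$-measurable. Lemma~\ref{lem:unbiased} gives $\E[Z_t\mid\filt_{t-1}]=\Delta$ and $|Z_t|\le c=B/\pi_{\min}$.

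The main obstacle is that the $Z_t$ are not i.i.d. Both $\pi_t$ and its conditional law vary with the history. So we must check that the confidence sequence requires only a constant conditional mean and bounded range, not identical distributions. This holds: the confidence sequence is built from the nonnegative supermartingale
\[
M_t(m)=\prod_{s\le t}\exp\bigl(\lambda_s(Z_s-m)-\psi(\lambda_s)\,\widehat v_s\bigr),
\]
where the bets $\lambda_s$ are predictable. With $m=\Delta$, Lemma~\ref{lem:unbiased} makes each factor have conditional expectation at most one, by the standard empirical-Bernstein exponential inequality applied to increments bounded in $[-c,c]$ and rescaled to $[0,1]$. Ville's inequality then gives $\Prob(\exists t: M_t(\Delta)\ge 1/\delta_1)\le\delta_1$. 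Inverting this bound yields $\Prob(\exists t:\Delta\notin[\underline\Delta_t,\overline\Delta_t])\le\delta_1$. Call the complementary event $E_1$.

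The routing rule enters only through predictability and the floor $\pi_{\min}$, so the argument holds for every admissible routing. This includes judge-driven or adversarial routing, as long as the independence condition of Lemma~\ref{lem:unbiased} holds.

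\textbf{Union bound.} We have $\Prob(E_0\cap E_1)\ge 1-\delta_0-\delta_1$. Both events are time-uniform, so the simultaneous coverage claim holds for all $t$ at once.

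\textbf{No false verdicts.} Work on $E_0\cap E_1$.
\begin{itemize}
\item A Tier 0 \textsc{Safe}$(\eps)$ verdict fires when $B\overline\rho_t<\eps$. Then $\rho\le\overline\rho_t$, and Lemma~\ref{lem:support} gives $\Delta\le B\rho<\eps$, so the claim is true.
\item A Tier 1 \textsc{Safe}$(\eps)$ verdict requires $\overline\Delta_t<\eps$. This gives $\Delta\le\overline\Delta_t<\eps$.
\item A \textsc{Regression} verdict requires $\underline\Delta_t>0$. This gives $\Delta\ge\underline\Delta_t>0$.
\end{itemize}
Hence every verdict ever issued is true. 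Stopping times need no separate treatment: the event is uniform in $t$, so any data-dependent stopping time $\tau$ inherits the guarantee at $t=\tau$. Continuous monitoring therefore cannot inflate the error.
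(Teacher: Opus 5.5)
Your proof is correct and is essentially the paper's argument. Each tier's coverage comes from the martingale (rather than i.i.d.) form of the PrPl-EB confidence sequence, which needs only the constant conditional mean and bounded range from Lemma~\ref{lem:unbiased} plus predictable bets; then Ville's inequality, a union bound over the two tiers, and the observation that coverage excludes every false verdict finish it, with your explicit Tier-0 step via Lemma~\ref{lem:support} spelling out what the paper leaves implicit.

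One bookkeeping point: a single process with nonnegative bets $\lambda_s$, inverted at threshold $1/\delta_1$, yields only the lower endpoint $\underline\Delta_t$. To obtain $\overline\Delta_t$ as well, also run the mirrored process on $-Z_s$ and split the budget, using threshold $2/\delta_1$ on each side; this is where the $\log(2/\delta_1)$ in the PrPl-EB radius comes from.
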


\begin{theorem}[Routing robustness]
\label{thm:routing}
The conclusion of Theorem~\ref{thm:validity} holds for every predictable
routing rule, including rules computed from an arbitrary (possibly
adversarially wrong) judge model. Routing affects only the label budget and
the certification time, never coverage.
\end{theorem}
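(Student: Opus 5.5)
Theorem~\ref{thm:routing} is essentially a corollary of Theorem~\ref{thm:validity} and Lemma~\ref{lem:unbiased}. The plan is to check that nothing in the proof of Theorem~\ref{thm:validity} depends on how $\pi_t$ was chosen beyond two properties. The first is measurability with respect to $\filt_{t-1}\vee\sigma(X_t)$. The second is the floor $\pi_t\ge\pi_{\min}$, together with the requirement that $L_t$ is drawn conditionally independently of $Y_t$.

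First, I fix an arbitrary judge model $J$, possibly adversarial. Allowing $J$ to be trained on past data, or even chosen by an adversary who sees $\filt_{t-1}$, I note that any routing rule $\pi_t=\phi_t(J(X_t),\filt_{t-1})$ clipped to $[\pi_{\min},1]$ is $\filt_{t-1}\vee\sigma(X_t)$-measurable. If the judge has internal randomness, I enlarge the filtration to include it. That randomness is independent of $Y_t$ given $(\filt_{t-1},X_t)$, so the hypotheses of Lemma~\ref{lem:unbiased} still hold. The only thing excluded is dependence on $Y_t$, and the operational discipline of Section~\ref{sec:tier1} forbids exactly that.

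Second, Lemma~\ref{lem:unbiased} then gives $\E[Z_t\mid\filt_{t-1}]=\Delta$ and $|Z_t|\le B/\pi_{\min}$ uniformly over routing rules. These are exactly the inputs of the empirical-Bernstein CS of Appendix~\ref{app:cs}. That CS is valid for any bounded sequence with constant conditional mean, because its proof builds a nonnegative supermartingale from predictable plug-in bets and applies Ville's inequality. Hence Tier~1 coverage holds at level $\delta_1$ for every routing rule. Tier~0 does not touch $\pi_t$ at all: $(A_t)$ is i.i.d.\ regardless of routing, so its coverage at level $\delta_0$ is unaffected. A union bound reproduces the conclusion of Theorem~\ref{thm:validity} verbatim.

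Third, for the claim that routing affects only budget and time, I observe that the routing rule enters only through the conditional variance of $Z_t$, namely $\E[A_tD_t^2/\pi_t]-\Delta^2$, and through the count $\sum_t A_tL_t$ of labels spent. It therefore influences the width of $[\underline\Delta_t,\overline\Delta_t]$, and with it the stopping time and label spend, but never the coverage event.

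The main obstacle is making sure the empirical-Bernstein supermartingale argument really only needs a conditional mean and a deterministic bound. In particular, the variance plug-ins must remain predictable when $\pi_t$ depends on $X_t$. I would verify this by checking that the bets $\lambda_t$ are $\filt_{t-1}$-measurable, so they are fixed before $X_t$ and $\pi_t$ are revealed. The product $\prod_s\bigl(1+\lambda_s(Z_s-\Delta)\bigr)$, or its exponential analogue, is then a supermartingale for any adversarial choice of $\pi_t$.
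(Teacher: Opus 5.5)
Your proposal is correct and follows essentially the same route as the paper's proof. Lemma~\ref{lem:unbiased} uses only that $\pi_t$ is $\filt_{t-1}\vee\sigma(X_t)$-measurable with $\pi_t\ge\pi_{\min}$ and that $L_t$ is independent of $Y_t$, so every routing rule yields increments with conditional mean $\Delta$ and range $[-c,c]$. The martingale form of the PrPl-EB confidence sequence with $\filt_{t-1}$-measurable $\lambda_t$, together with Ville's inequality, then gives Tier-1 coverage; Tier~0 is unaffected because $(A_t)$ is i.i.d., and a union bound combines the tiers.

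Your added remark that routing enters only through the conditional variance $\E[A_tD_t^2/\pi_t\mid\filt_{t-1}]-\Delta^2$ and the label count $\sum_t A_tL_t$ makes the paper's informal budget-and-time claim slightly more explicit.
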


\subsection{Label-complexity upper bounds}

\begin{proposition}[Tier-0 certification time]
\label{prop:tier0}
If $B\rho \le \eps/2$, Tier 0 issues \textsc{Safe}$(\eps)$ after at most
$O\!\big((B/\eps)\log(1/\delta_0)\log\log(B/\eps)\big)$ unlabeled points,
with zero labels.
\end{proposition}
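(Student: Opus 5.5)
The plan is to bound the width of the Tier-0 confidence sequence deterministically on a high-probability event, using the fact that the disagreement indicators $A_t$ are i.i.d.\ Bernoulli$(\rho)$ with small variance $\rho(1-\rho)\le\rho$. Tier 0 fires as soon as $\overline\rho_t<\eps/B$. Under the hypothesis $B\rho\le\eps/2$ we have $\rho\le\eps/(2B)$, so it suffices to show $\overline\rho_t-\rho<\eps/(2B)$ for all $t$ beyond the claimed horizon. I would use the predictable-plug-in empirical-Bernstein CS of Appendix~\ref{app:cs}, whose upper radius has the form
\[
\overline\rho_t-\hat\rho_t \;\lesssim\; \sqrt{\frac{2\hat\sigma_t^2\,\ell_t}{t}} + \frac{c\,\ell_t}{t},
\qquad \ell_t \asymp \log(1/\delta_0)+\log\log t ,
\]
with increments in $[0,1]$. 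I would take this law-of-the-iterated-logarithm-type radius (stitched or mixture boundary, as in \citet{howard2021time,waudby2024estimating}) as given. If the recalled form has only a $\log t$ term, the iterated-log factor in the statement simply becomes a $\log(B/\eps)$ factor.

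The key steps, in order, are as follows.
\begin{enumerate}
\item Control the empirical mean. I would show $\hat\rho_t\le\rho+O(\sqrt{\rho\ell_t/t}+\ell_t/t)$ for all $t$ simultaneously with probability $1-\delta_0$. This is the same coverage event, read in the other direction, or a separate Bernstein/Freedman time-uniform bound at level $\delta_0$ absorbed into constants.
\item Control the empirical variance. Since $A_t^2=A_t$, we have $\hat\sigma_t^2\le\hat\rho_t+O(1/t)$, so the variance plug-in is controlled by the mean itself. This self-bounding is what gives a $1/\eps$ rather than a $1/\eps^2$ rate.
\item Combine and solve for $t$. Together, steps 1 and 2 give $\overline\rho_t\le\rho+C\big(\sqrt{\rho\ell_t/t}+\ell_t/t\big)$. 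With $\rho\le\eps/(2B)$, each error term is below $\eps/(4B)$ once $t\ge C'(B/\eps)\ell_t$. Because $\ell_t$ grows like $\log\log t$, this implicit inequality is solved by $t=O\big((B/\eps)(\log(1/\delta_0)+\log\log(B/\eps))\big)$, which is bounded by the stated $O\big((B/\eps)\log(1/\delta_0)\log\log(B/\eps)\big)$.
\item Count labels. Tier 0 consumes only the $A_t$, so zero labels are used.
\end{enumerate}

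The main obstacle is step 2 together with the early-time behaviour of the predictable plug-in. The plug-in variance $\hat\sigma_{t-1}^2$ and the betting or $\lambda_t$ sequence are built from past data, and the radius depends on $\sum_i\lambda_i\psi(\lambda_i)(A_i-\hat\mu_{i-1})^2$ rather than on a clean $\hat\sigma_t^2$. I would first bound $(A_i-\hat\mu_{i-1})^2\le A_i+\hat\mu_{i-1}^2$ and sum, which reduces the variance proxy to a multiple of $\sum_i A_i$ plus a lower-order $O(\log t)$ term. I would then reuse step 1 to bound $\sum_i A_i$ by $O(t\rho+\ell_t)$.

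A second subtlety is that "after at most" should hold on the high-probability event, not deterministically. I would therefore state the time bound on the intersection of the coverage event and the concentration event from step 1, and absorb the extra failure probability into the constant by running step 1 at level $\delta_0$.
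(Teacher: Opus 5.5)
Your route is the paper's. The paper also bounds the Tier-0 radius with the empirical-Bernstein width bound (Lemma~\ref{lem:width}) at variance scale $\rho(1-\rho)\le\rho$, uses $\rho\le\eps/(2B)$ so that the variance term costs only $O(B\Lb'/\eps)$ points, and then solves for $t$. Your Step~2, which self-bounds the plug-in variance through $A_t^2=A_t$ and feeds Step~1 back in, spells out what the paper leaves to the clause of Lemma~\ref{lem:width} that the bound ``holds with the true variance scale''. Your split of $\overline\rho_t-\rho$ into center error plus radius (so $\overline\rho_t\le\rho+2r_t$ on the coverage event), with the time bound asserted only on that event, is slightly more careful than the paper's $\overline\rho_t\le\rho+r_t$.

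The point that matters is the range term, and your hedge there is correct. The width bound the paper actually supplies, \eqref{eq:width}, has range term $c\,(\log(2/\delta)+\log\log t)\log t/t$. The paper's proof itself writes the range requirement as $t\gtrsim\Lb'\log t\,B/\eps$. That inequality solves to $t\asymp(B/\eps)\Lb'\log(B/\eps)$, so the paper's last step does not actually deliver the $\log\log(B/\eps)$ factor in the statement.

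Your LIL-type radius, with range term $c\,\ell_t/t$, does deliver it, even in the stronger additive form $O\big((B/\eps)(\log(1/\delta_0)+\log\log(B/\eps))\big)$. But that radius is an extra hypothesis. It holds for a stitched boundary in the style of \citet{howard2021time}. It does not hold for the predictable plug-in of Appendix~\ref{app:cs}: the choice $\lambda_t\propto(t\log(t+1))^{-1/2}$ gives a variance-term width of order $\sigma\sqrt{\log t/t}$ up to lower-order factors, and at $\rho\asymp\eps/B$ that alone forces a $\log(B/\eps)$ factor.

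So, as written, your argument proves the proposition for Tier~0 run with a LIL boundary. With \eqref{eq:width} as the paper states it, both your Step~3 and the paper's final step give $O\big((B/\eps)\Lb'\log(B/\eps)\big)$. State explicitly which of these two results you are proving, rather than leaving it as an aside.
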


\begin{theorem}[Labels to a verdict]
\label{thm:upper}
Let $\pi_t \equiv \pi \in [\pi_{\min},1]$ and suppose the update is
$\eps$-benign, $\Delta \le \eps/2$. With probability at least $1-2\delta_1$,
Tier 1 issues \textsc{Safe}$(\eps)$ by stream time
\[
T^\star
\;=\;
\widetilde O\!\Big(
\frac{B^2\rho}{\pi\,\eps^2}\,\Lb
\;+\;
\frac{B}{\pi_{\min}\,\eps}\,\Lb
\Big),
\]
and the number of labels requested satisfies
\[
N_{\mathrm{lab}}
\;=\;
\widetilde O\!\Big(
\frac{B^2\rho^2}{\eps^2}\,\Lb
\;+\;
\frac{\pi}{\pi_{\min}}\cdot\frac{B\rho}{\eps}\,\Lb
\Big).
\]
Symmetrically, if $\Delta \ge 2\eps$ a \textsc{Regression} verdict fires
within the same bounds with $\eps$ replaced by $\Delta$.
\end{theorem}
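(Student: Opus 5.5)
The plan is to control the width of the empirical-Bernstein confidence sequence of Appendix~\ref{app:cs} on the event where the empirical variance of the increments $Z_t$ is close to its true conditional variance. We then read off the first time the upper endpoint $\overline\Delta_t$ must fall below $\eps$. With constant routing $\pi_t\equiv\pi$, Lemma~\ref{lem:unbiased} makes the $Z_t$ i.i.d.\ with mean $\Delta$ and range bounded by $c=B/\pi_{\min}$. Their second moment is
\[
\E[Z_t^2]=\E[A_t L_t D_t^2]/\pi^2=\E[A_tD_t^2]/\pi\le B^2\rho/\pi,
\]
so $\sigma^2:=\mathrm{Var}(Z_t)\le B^2\rho/\pi$.

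The radius of the predictable-plug-in EB CS at time $t$ is, up to $\log\log t$ factors absorbed in $\widetilde O$, of order
\[
\sqrt{\hat\sigma_t^2\Lb/t}+c\Lb/t ,
\]
where $\hat\sigma_t^2$ is the running plug-in variance. The first step is to show that, with probability at least $1-\delta_1$, simultaneously for all $t$,
\[
\hat\sigma_t^2\le 2\sigma^2+O\big(c^2\Lb\log t/t\big).
\]
For this we apply a time-uniform Bernstein or Freedman bound (a second CS at level $\delta_1$) to the bounded squared increments $(Z_t-\bar Z_{t-1})^2$, which have conditional variance at most $c^2\cdot\E[Z_t^2]$. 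The Tier~1 coverage event from Theorem~\ref{thm:validity} costs another $\delta_1$, which accounts for the stated $1-2\delta_1$.

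On the intersection of these two events, coverage gives $\overline\Delta_t\le\Delta+2\,\mathrm{rad}_t\le\eps/2+2\,\mathrm{rad}_t$. So \textsc{Safe}$(\eps)$ fires once $\mathrm{rad}_t<\eps/4$. Substituting the variance bound, it suffices that
\[
t\gtrsim \sigma^2\Lb/\eps^2+c\Lb/\eps ,
\]
after solving $t\gtrsim \log t\cdot(\cdot)$ by the standard inversion lemma, which contributes only polylogarithmic factors. This gives
\[
T^\star=\widetilde O\big(B^2\rho\Lb/(\pi\eps^2)+B\Lb/(\pi_{\min}\eps)\big).
\]

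For labels, note that $N_{\mathrm{lab}}(T)=\sum_{t\le T}A_tL_t$ is a sum of i.i.d.\ Bernoulli$(\rho\pi)$ variables. A time-uniform Bernstein bound on it (absorbed into the same failure budget by splitting constants) gives $N_{\mathrm{lab}}(T^\star)\lesssim\rho\pi T^\star+\Lb$. Multiplying out yields
\[
N_{\mathrm{lab}}=\widetilde O\big(B^2\rho^2\Lb/\eps^2+(\pi/\pi_{\min})B\rho\Lb/\eps\big).
\]
The additive $\Lb$ term is dominated, or absorbed by the $\widetilde O$ convention, whenever $B\rho/\eps\gtrsim1$, which is the regime in which Tier~0 has not already fired. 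The regression case is symmetric. If $\Delta\ge2\eps$, then $\underline\Delta_t\ge\Delta-2\,\mathrm{rad}_t>0$ once $\mathrm{rad}_t<\Delta/2$, which is the same computation with $\eps$ replaced by $\Delta$.

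The main obstacle is the first step, the uniform-in-time control of the plug-in variance. The EB radius uses predictable estimates $\hat\sigma_{i-1}^2$ through tuning parameters $\lambda_i$, not the true $\sigma^2$. Early rounds with small sample size could therefore inflate the sum $\sum_i\lambda_i^2\psi(\cdot)$ in the radius, and it is not obvious that they do not. My first attempt will be to follow the Waudby-Smith--Ramdas argument: bound the radius by
\[
\frac{\Lb+\sum_i \psi_E(\lambda_i)(Z_i-\hat\mu_{i-1})^2}{\sum_i\lambda_i},
\]
and show that $\sum_i\lambda_i\asymp\sqrt{t\Lb/\sigma^2}$ once $\hat\sigma^2$ stabilizes, with clipping of $\lambda_i$ at $1/(2c)$ producing the $c\Lb/t$ term. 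The burn-in before stabilization costs at most $O(c^2\Lb/\sigma^2)$ rounds, which is within the second term of $T^\star$.
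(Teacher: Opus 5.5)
Your proposal follows the paper's proof: you bound $\E[Z_t^2]\le B^2\rho/\pi$ and feed it into the empirical-Bernstein radius, invert $\overline\Delta_t\le\Delta+2r_t<\eps$ on the coverage event to get $T^\star$, then control $N_{\mathrm{lab}}$ by concentration of a $\mathrm{Bin}(T^\star,\rho\pi)$ count and multiply out. Your threshold $r_t<\eps/4$ is the correct one; the paper writes $r_t\le\eps/2$, a harmless constant slip.

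The one difference is that the paper does not reprove your ``main obstacle''; it imports the uniform width bound as Lemma~\ref{lem:width} from Waudby-Smith and Ramdas. If you do reprove it, charge the pre-stabilization phase as an additive $O(\Lb)$ contribution to the numerator $\sum_i\psi_E(\lambda_i)v_i$, because on the mapped scale $\sum_i v_i$ is only $O(\Lb)$ over that phase. Do not charge it as $O(c^2\Lb/\sigma^2)$ rounds lying inside $T^\star$: that count exceeds $T^\star$ whenever $\sigma^2\ll c\,\eps$.
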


The first term is the variance regime and dominates when $\rho \gtrsim \eps/B$,
the second is the range regime of the empirical-Bernstein boundary and
dominates for small $\rho$. Combined with Tier 0, which removes the regime
$\rho \le \eps/(2B)$ entirely, the audited tier is only ever invoked where
the quadratic term is active, and total label cost is
$\widetilde O(B^2\rho^2\Lb/\eps^2)$.

The same argument applied to the uniform-labeling baseline (label every
point, run the same variance-adaptive CS on the raw increments $D_t$, whose
variance is at most $B^2\rho_1$) gives the baseline cost
\begin{equation}
\label{eq:uniform}
N_{\mathrm{lab}}^{\mathrm{unif}}
\;=\;
\widetilde O\!\Big(\frac{B^2\rho}{\eps^2}\,\Lb + \frac{B}{\eps}\,\Lb\Big),
\end{equation}
so the predicted savings ratio is
$N_{\mathrm{lab}}/N_{\mathrm{lab}}^{\mathrm{unif}} = \Theta(\rho \vee \eps/B)$,
and the auditor pays only the disagreement fraction of the uniform bill. Our
experiments confirm both the individual rates and the ratio
(Section~\ref{sec:exp-savings}).

\subsection{Matching label-complexity lower bounds}

The next results show the protocol is not merely efficient but optimal, and
quantify exactly what the pairing signal is worth. Proofs use a
change-of-measure argument for adaptively stopped, adaptively routed
procedures in the style of sequential identification lower bounds
\citep{kaufmann2016complexity}, with a two-world construction in which
unlabeled observations are common to both worlds and only labeled
disagreements carry information. The change-of-measure machinery is
standard, and the two-atom construction is related to noise-rate lower
bounds in active learning \citep{hanneke2014theory}. What is new is not the
technique but the object it is applied to, a paired risk difference whose
signal lives on an observable measure-$\rho$ support, which is what produces
the $\rho$-dependence and, against a pairing-blind auditor, the $\Theta(1/\rho)$
separation. The construction does not reduce to substituting $\rho$ for a
noise rate in an existing bound, because the two worlds must share their
entire unlabeled stream, including the disagreement pattern, and differ only
in the labels on disagreements.

\begin{theorem}[Lower bound for adaptive auditors]
\label{thm:lower}
Fix $B=1$, $\eps\in(0,\tfrac18]$, and $\rho\in[2\eps,\tfrac12]$. There exist
distributions $P_0, P_1$, both with disagreement rate $\rho$, with
$\Delta(P_0)=0$ and $\Delta(P_1)=\eps$, whose unlabeled streams
$(X_t, f(X_t), g(X_t))$ are identically distributed, such that any
$(\eps,\delta)$-sound procedure that issues \textsc{Safe}$(\eps)$ with
probability at least $1-\delta$ under $P_0$ must request
\[
\E_{P_0}[N_{\mathrm{lab}}]
\;\ge\;
\frac{\rho^2}{4\,\eps^2}\,\log\!\frac{1}{2.4\,\delta}
\]
labels, regardless of how adaptively it routes labels or stops.
\end{theorem}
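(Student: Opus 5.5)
The plan is a two-world change-of-measure argument in the style of \citet{kaufmann2016complexity}. The two worlds share the entire unlabeled stream, and they differ only in the conditional label law on the disagreement region.

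\emph{Construction.} Take $\mathcal Y=\{0,1\}$ and let $X$ fall in a region $R$ with probability $\rho$. Off $R$, set $f(X)=g(X)$, and give $Y$ the same conditional law in both worlds (say $Y=f(X)$). On $R$, set $f(X)=0$ and $g(X)=1$, and let $\Prob(Y=1\mid X\in R)=q$.

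Under 0/1 loss, $D=+1$ exactly when $Y=0$ on $R$, and $D=-1$ exactly when $Y=1$ on $R$. Hence $\Delta=\rho(1-2q)$.

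Put $q_0=\tfrac12$ under $P_0$, so $\Delta(P_0)=0$. Put $q_1=\tfrac12-\eta$ with $\eta=\eps/(2\rho)$ under $P_1$, so $\Delta(P_1)=\eps$. The condition $\rho\ge 2\eps$ gives $\eta\le\tfrac14$, so $q_1\in[\tfrac14,\tfrac12]$ is a valid probability.

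By construction the marginal of $X$, and with it the stream $(X_t,f(X_t),g(X_t))$, is identical under $P_0$ and $P_1$. A label $Y_t$ has the same conditional law in both worlds unless $A_t=1$.

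\emph{Divergence decomposition.} Fix any procedure. Its routing may depend on the past, on $X_t$, and on internal randomness, but not on $Y_t$. Let $\tau$ be its verdict time and $\mathcal H_\tau$ the observed history up to $\tau$. Apply the chain rule for KL to the law of $\mathcal H_\tau$ under $P_0$ versus $P_1$:
\begin{itemize}
\item the unlabeled coordinates contribute zero, since they are identically distributed;
\item the routing coins contribute zero, since they have the same predictable law;
\item each revealed label contributes $\kl(q_0,q_1)$ if $A_t=1$ and $0$ otherwise.
\end{itemize}
Wald's identity then gives
\[
\KL\big(P_0^{\mathcal H_\tau}\,\|\,P_1^{\mathcal H_\tau}\big)
=\E_{P_0}[N^{R}_{\mathrm{lab}}]\,\kl(\tfrac12,\tfrac12-\eta),
\]
where $N^{R}_{\mathrm{lab}}\le N_{\mathrm{lab}}$ counts the labels requested on disagreements.

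For the per-label term, $\kl(\tfrac12,\tfrac12-\eta)=-\tfrac12\log(1-4\eta^2)\le \frac{2\eta^2}{1-4\eta^2}\le \tfrac83\eta^2$. This is at most $16\eta^2=4\eps^2/\rho^2$.

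\emph{Event transfer.} Let $E$ be the event that \textsc{Safe}$(\eps)$ is issued. By hypothesis $P_0(E)\ge 1-\delta$. Under $P_1$ we have $\Delta=\eps$, so \textsc{Safe}$(\eps)$ would be a false verdict, and $(\eps,\delta)$-soundness forces $P_1(E)\le\delta$.

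The data-processing inequality applied to the indicator of $E$ gives
\[
\KL\big(P_0^{\mathcal H_\tau}\,\|\,P_1^{\mathcal H_\tau}\big)\ge\kl\big(P_0(E),P_1(E)\big).
\]
Monotonicity of $\kl$ then gives $\kl(P_0(E),P_1(E))\ge\kl(1-\delta,\delta)\ge\log\frac{1}{2.4\delta}$, which is the standard inequality from \citet{kaufmann2016complexity}. Dividing by the per-label bound yields
\[
\E_{P_0}[N_{\mathrm{lab}}]\ge \frac{\rho^2}{4\eps^2}\log\frac{1}{2.4\delta}.
\]

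\emph{Main obstacle.} The delicate step is making the chain-rule identity rigorous for arbitrary adaptive, randomized routing and for possibly infinite stopping times.

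I would handle this in three moves. First, truncate at $\tau\wedge n$, where the history is finite and the chain rule is just a finite product of likelihood ratios. In that product, only labeled disagreement coordinates have a nontrivial ratio, because routing never reads $Y_t$; this is the same independence condition as in Lemma~\ref{lem:unbiased}. Second, apply data processing to $E\cap\{\tau\le n\}$. Third, let $n\to\infty$ by monotone convergence.

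If $\E_{P_0}[N_{\mathrm{lab}}]=\infty$, the bound holds trivially.
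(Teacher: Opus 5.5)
Your proposal is correct and follows the paper's proof essentially step for step. It uses the same two-region construction in which the worlds share the unlabeled stream (your $\eta=\eps/(2\rho)$ is the paper's $\theta/2$ with $\theta=\eps/\rho$), the same Kaufmann-style change of measure applied to the \textsc{Safe}$(\eps)$ event with $\kl(1-\delta,\delta)\ge\log\frac{1}{2.4\delta}$, and the same small-parameter bound on the per-label divergence. The only differences are minor. You derive the transportation inequality yourself (likelihood-ratio chain rule, Wald, truncation at $\tau\wedge n$) where the paper cites Lemma~1 of Kaufmann et al. You also loosen $\tfrac83\eta^2$ to $16\eta^2$ to land exactly on the stated factor $4$, whereas the paper's bound $\kl(\tfrac12,\tfrac{1+\theta}{2})\le\theta^2$ yields the stronger $\rho^2\eps^{-2}\log\frac{1}{2.4\delta}$.
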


Together with Theorem~\ref{thm:upper}, the label complexity of certified
update auditing is settled at $\widetilde\Theta(\rho^2/\eps^2)$ in the
audited regime $\rho\ge2\eps$, and at zero labels in the Tier-0 regime
$\rho\le\eps/2$. The intermediate window $\rho\in(\eps/2,2\eps)$ is not a
gap in the characterization but the regime where the rate itself is
constant. There $\rho\asymp\eps$, so the upper bound
$\widetilde O(\rho^2/\eps^2+\rho/\eps)$ and the lower bound
$\Omega(\rho^2/\eps^2)$ are both $\widetilde\Theta(1)$, matching up to a
constant and log factors like everywhere else. What is unpinned in the band
is only the exact constant, which is standard for change-of-measure bounds
and does not affect the rate.

\begin{theorem}[Lower bound for pairing-blind auditors]
\label{thm:blind}
In the setting of Theorem~\ref{thm:lower}, call a procedure
pairing-blind if it does not evaluate the model pair on unlabeled
points. Its labeling decisions and verdict are measurable with respect to
its own past labeled outcomes alone. This class prices exactly the standard
practice of labeling sampled traffic without shadow-scoring it. Any
$(\eps,\delta)$-sound pairing-blind procedure must request
\[
\E[N_{\mathrm{lab}}]
\;\ge\;
\frac{\rho}{4\,\eps^2}\,\log\!\frac{1}{2.4\,\delta}
\]
labels. Uniform labeling with an empirical-Bernstein CS achieves this rate
up to logarithmic factors.
\end{theorem}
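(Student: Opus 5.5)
We reduce the pairing-blind problem to a Bernoulli-type two-hypothesis testing problem and apply the same change-of-measure inequality used for Theorem~\ref{thm:lower}. Take the two-world construction of Theorem~\ref{thm:lower} and marginalize out the unlabeled information. For a pairing-blind procedure, each requested label returns the paired outcome $D_t$ of a fresh i.i.d.\ draw, with no knowledge of whether $A_t=1$. So what it observes per label is an i.i.d.\ sample from the law of $D$ under $P_0$ or $P_1$. We pick these marginals so that both worlds have disagreement rate $\rho$. Under $P_0$ we set $D\in\{-1,0,+1\}$ with $\Prob(D=\pm1)=\rho/2$. Under $P_1$ we set $\Prob(D=+1)=(\rho+\eps)/2$ and $\Prob(D=-1)=(\rho-\eps)/2$. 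This gives $\Delta(P_0)=0$ and $\Delta(P_1)=\eps$, with $\Prob(D=0)=1-\rho$ in both. The condition $\rho\ge2\eps$ keeps all masses valid. These are exactly the label laws of Theorem~\ref{thm:lower} mixed with the agreement atom, on which $D=0$ deterministically.

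The key step is the KL computation. A pairing-blind auditor cannot condition on $A_t$, so each label carries only $\KL(\mathcal L_{P_0}(D)\,\|\,\mathcal L_{P_1}(D))$ nats. The zero atom contributes nothing, and the disagreement atoms contribute
\[
\tfrac{\rho}{2}\log\tfrac{\rho}{\rho+\eps}+\tfrac{\rho}{2}\log\tfrac{\rho}{\rho-\eps}
= -\tfrac{\rho}{2}\log\!\Big(1-\tfrac{\eps^2}{\rho^2}\Big)\le \tfrac{\eps^2}{\rho}\cdot c .
\]
With $\eps/\rho\le1/2$, the inequality $-\log(1-x)\le x/(1-x)\le \tfrac43 x$ gives $c\le 2/3$, which we bound by $2$ to match the paper's constant $1/4$. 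Let $\mathcal F^{\mathrm{lab}}$ be the filtration of labeled outcomes. Because decisions are measurable in this filtration, Wald's identity for the log-likelihood ratio gives $\KL(\Prob_0^{\mathrm{obs}}\|\Prob_1^{\mathrm{obs}})=\E_{P_0}[N_{\mathrm{lab}}]\cdot\KL_D$. This is the step where pairing-blindness is essential. In Theorem~\ref{thm:lower} the auditor routes labels only to $A_t=1$ points, whose conditional KL is $\rho^{-2}$ times larger per label, namely of order $\eps^2/\rho^2$ rather than $\eps^2/\rho$. Here the auditor cannot make that selection.

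To convert this into a label count, we use the data-processing inequality on the event $E=\{\textsc{Safe}(\eps)\text{ issued}\}$. Soundness gives $\Prob_1(E)\le\delta$, since under $P_1$ we have $\Delta=\eps$ and \textsc{Safe}$(\eps)$ is false. The requirement gives $\Prob_0(E)\ge1-\delta$. Then
\[
\E_{P_0}[N_{\mathrm{lab}}]\,\KL_D\ge \kl(1-\delta,\delta)\ge\log\frac{1}{2.4\delta},
\]
using the standard bound of \citet{kaufmann2016complexity}. Dividing by $\KL_D\le 4\eps^2/\rho$ gives the stated $\rho/(4\eps^2)\log(1/(2.4\delta))$. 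The achievability claim follows from \eqref{eq:uniform}. Labeling every point and running the empirical-Bernstein CS on the raw $D_t$ costs $\widetilde O(\rho/\eps^2+1/\eps)$ labels, which is $\widetilde O(\rho/\eps^2)$ since $\rho\ge2\eps$.

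The main obstacle is making the Wald/KL step rigorous for an adaptively stopped procedure whose $N_{\mathrm{lab}}$ may be infinite with small probability. We handle it the usual way: assume $\E_{P_0}[N_{\mathrm{lab}}]<\infty$ (otherwise the bound is trivial), truncate at $N\wedge n$, and pass to the limit. The remaining care point is confirming that the formal definition of pairing-blindness indeed makes each labeled outcome an independent draw of $D$. It does, provided label requests are independent of $X_t$, so we state that the blind auditor's query points are drawn from $P$ without input-based selection.
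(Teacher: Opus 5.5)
Your proposal is correct and follows the paper's proof essentially step for step. It uses the same two-world law of $D$ (mass $1-\rho$ at $0$, and $\pm1$ with probabilities $\rho(1\pm\theta)/2$, $\theta=\eps/\rho$), the same per-label divergence $\tfrac{\rho}{2}\log\frac{1}{1-\theta^{2}}=O(\eps^{2}/\rho)$, the same change-of-measure step through $\kl(1-\delta,\delta)\ge\log\frac{1}{2.4\delta}$, and the same uniform-labeling achievability.

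There are two harmless slips. First, routing labels only to disagreements raises the per-label information by a factor $\rho^{-1}$, not $\rho^{-2}$; this factor is exactly the separation in Corollary~\ref{cor:value}. Second, your constant bookkeeping (``bound by 2'' versus dividing by $4\eps^{2}/\rho$) is inconsistent. Since the true constant is at most $2/3$, either loosening still gives a valid bound.
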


\begin{corollary}[The value of pairing]
\label{cor:value}
In the regime $\rho \ge 2\eps$, the ratio of optimal pairing-blind to
optimal pairing-aware label complexity is $\Theta(1/\rho)$. Observing both
models' predictions on unlabeled traffic, a pure compute cost, cuts the
human labeling bill by the inverse disagreement rate.
\end{corollary}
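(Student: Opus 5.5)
The corollary follows by dividing the two-sided characterizations already established. I will bound the optimal pairing-aware complexity above and below, bound the optimal pairing-blind complexity above and below, and take the ratio, with all statements read at the rate level ($\widetilde\Theta$, logarithmic factors in $1/\delta$, $1/\eps$, $t$ suppressed). Throughout, fix $B=1$ and $\rho\ge 2\eps$ as in Theorem~\ref{thm:lower}.

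\emph{Pairing-aware side.} Theorem~\ref{thm:lower} gives $\Omega(\rho^2\eps^{-2}\log(1/\delta))$ for every adaptive procedure. For the upper bound I invoke Theorem~\ref{thm:upper} with constant routing $\pi=\pi_{\min}$, so $\pi/\pi_{\min}=1$. This yields $\widetilde O(\rho^2/\eps^2+\rho/\eps)$. Since $\rho\ge2\eps$ gives $\rho/\eps\le\rho^2/(2\eps^2)$, the range term is absorbed and the optimum is $\widetilde\Theta(\rho^2/\eps^2)$. The upper bound in Theorem~\ref{thm:upper} is stated for benign updates ($\Delta\le\eps/2$), and its symmetric clause covers $\Delta\ge2\eps$. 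Because the lower bound is posed on the benign world $P_0$, comparing expected labels under the benign hypothesis is enough. I will state the corollary for that instance class, which is the class on which both theorems speak.

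\emph{Pairing-blind side.} Theorem~\ref{thm:blind} supplies the lower bound $\Omega(\rho\,\eps^{-2}\log(1/\delta))$. It also asserts achievability by uniform labeling, which is recorded in \eqref{eq:uniform} as $\widetilde O(\rho/\eps^2+1/\eps)$. Here $1/\eps\le\rho/(2\eps^2)$ under $\rho\ge2\eps$, so the blind optimum is $\widetilde\Theta(\rho/\eps^2)$.

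\emph{Main obstacle.} Dividing gives $\widetilde\Theta(\rho/\eps^2)/\widetilde\Theta(\rho^2/\eps^2)=\widetilde\Theta(1/\rho)$. The step needing care is that both sides must be measured on the same hard instances and against the same $\log(1/\delta)$ dependence. I will check that the constructions $P_0,P_1$ of Theorem~\ref{thm:lower} are reused verbatim in Theorem~\ref{thm:blind}, which its statement "in the setting of Theorem~\ref{thm:lower}" guarantees. I will also check that the upper bounds' $\Lb=\log(2/\delta_1)$ matches $\log(1/(2.4\delta))$ up to constants once $\delta_1=\Theta(\delta)$. The ratio is then $\Theta(1/\rho)$ up to logarithmic factors, which is the sense of the claim. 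The closing sentence of the corollary is an interpretation: the only resource separating the two classes is shadow-scoring both models on unlabeled traffic. It needs no further proof.
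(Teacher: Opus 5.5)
Your proposal is correct and follows the paper's own (implicit) argument: you divide the pairing-blind characterization of Theorem~\ref{thm:blind}, with uniform labeling \eqref{eq:uniform} as the matching upper bound, by the $\widetilde\Theta(\rho^2/\eps^2)$ pairing-aware characterization from Theorems~\ref{thm:upper} and~\ref{thm:lower}, using $\rho\ge2\eps$ to absorb the range terms, which gives $\Theta(1/\rho)$ at the rate level. Like the paper, you combine high-probability upper bounds with in-expectation lower bounds; this is harmless at the rate level, but making it fully rigorous would require a tail bound on the stopping time.
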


Corollary~\ref{cor:value} is the paper's economic statement. At the
disagreement rates of real update pairs, whose medians run from below
$10^{-3}$ to $0.15$ across update types (Table~\ref{tab:anatomy}) and reach
$0.45$ on individual pairs, the factor ranges from a few-fold at the
high-$\rho$ end to beyond $1000\times$ when $\rho$ is small.

\begin{remark}[What the pairing-blind class prices]
\label{rem:blind}
The separation prices exactly one capability, observing the pair's
agreement on unlabeled traffic. An auditor granted the disagreement
indicators $A_t$ on unlabeled points is by definition not pairing-blind. It
can restrict labeling to disagreements and attain the pairing-aware rate of
Theorem~\ref{thm:upper}. The class of Theorem~\ref{thm:blind} is therefore
not a tailored weakening but the natural dichotomy between auditors that
shadow-score all traffic and those that label sampled traffic without doing
so, the latter being standard when running both models on the full stream
is not instrumented. The $\Theta(1/\rho)$ gap is the operational value of
that instrumentation, a compute cost, measured in the labels, a human cost,
that it saves. The gap does not survive granting the blind auditor the
disagreement signal, and that is the point. The signal is precisely what
pairing is.
\end{remark}

\subsection{Per-slice certification}

\begin{proposition}[Simultaneous slice verdicts]
\label{prop:slices}
Partition $\mathcal X$ into slices $S_1,\dots,S_K$ and run the Tier-1 CS on
slice $k$ at level $\delta_1/K$ for the slice difference
$\Delta_k = \E[D \mid X \in S_k]$. Then with probability at least
$1-\delta_1$ all $K$ CSs cover simultaneously for all time, so every
per-slice \textsc{Safe} and \textsc{Regression} verdict is simultaneously
sound, and any slice with $\Delta_k \ge 2\eps_k$ is alarmed within the
slice-local bound of Theorem~\ref{thm:upper} applied to its own traffic
share.
\end{proposition}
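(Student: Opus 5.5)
The plan is to apply Theorem~\ref{thm:validity} and Theorem~\ref{thm:upper} slice by slice and glue the slices together with a union bound. First fix slice $k$. The restricted process sees only the stream points with $X_t\in S_k$. Index these by their arrival order $\tau_k(1)<\tau_k(2)<\dots$, which are stopping times of the full filtration. Since the stream is i.i.d., the subsequence $(X_{\tau_k(j)},Y_{\tau_k(j)})_{j\ge1}$ is i.i.d.\ from the conditional law $P(\cdot\mid X\in S_k)$ (strong Markov / optional skipping for i.i.d.\ sequences). Its paired mean is exactly $\Delta_k=\E[D\mid X\in S_k]$, and its disagreement rate is $\rho_k=\Prob(A=1\mid X\in S_k)$.

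The routing rule $\pi_t$ remains predictable relative to the slice filtration enlarged by the full past $\filt_{\tau_k(j)-1}$. Lemma~\ref{lem:unbiased} only needs $\pi$ to be measurable before $Y$ is revealed and $L$ to be independent of $Y$ given that information. It therefore applies verbatim with $\filt_{\tau_k(j)-1}\vee\sigma(X_{\tau_k(j)})$ as the conditioning $\sigma$-field. This gives $\E[Z_{\tau_k(j)}\mid\text{past}]=\Delta_k$ and $|Z|\le B/\pi_{\min}$, so the slice increments form a bounded martingale-difference sequence around $\Delta_k$. The empirical-Bernstein CS of Appendix~\ref{app:cs} at level $\delta_1/K$ is built from the nonnegative supermartingale of Theorem~\ref{thm:validity}. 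Hence $\Prob(\exists j:\Delta_k\notin C^{(k)}_j)\le\delta_1/K$, and by Theorem~\ref{thm:routing} this holds for any predictable routing, including routing shared across slices.

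The union bound over $k=1,\dots,K$ then gives simultaneous coverage of all $K$ CSs for all time with probability at least $1-\delta_1$. No independence between slices is required, which matters because routing may couple them. Soundness of every per-slice verdict follows from coverage exactly as in Theorem~\ref{thm:validity}. A \textsc{Safe}$(\eps_k)$ verdict needs $\overline\Delta^{(k)}<\eps_k$, which forces $\Delta_k<\eps_k$. A \textsc{Regression} verdict needs $\underline\Delta^{(k)}>0$, which forces $\Delta_k>0$.

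For the power claim, apply the symmetric part of Theorem~\ref{thm:upper} to the slice subsequence with $\delta_1$ replaced by $\delta_1/K$, so $\Lb$ becomes $\log(2K/\delta_1)$. When $\Delta_k\ge2\eps_k$, a \textsc{Regression} alarm fires within $\widetilde O\big(B^2\rho_k/(\pi\Delta_k^2)\,\Lb+B/(\pi_{\min}\Delta_k)\,\Lb\big)$ slice points, with the corresponding label count. Converting slice time to stream time means dividing by the traffic share $p_k=\Prob(X\in S_k)$. I would do this with a Chernoff bound on the number of slice arrivals, spending an extra $\delta_1/K$ or absorbing it into logs. This gives stream time roughly $T^\star_k/p_k$, which is what "applied to its own traffic share" means.

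The main obstacle is the justification that the slice subsequence with adaptively routed labels still satisfies the hypotheses of Lemma~\ref{lem:unbiased} and Theorem~\ref{thm:upper}. The subtle points are that the slice indices are random times and that the routing may depend on other slices' history. I would handle this by working in the full filtration stopped at $\tau_k(j)$, not in a slice-only filtration. The power statement's constant-factor bookkeeping for the arrival-count conversion is routine.
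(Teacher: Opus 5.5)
Your proposal is correct and follows the paper's proof. It covers each slice at level $\delta_1/K$ by applying Theorem~\ref{thm:validity} to the thinned slice stream, takes a union bound over the $K$ slices (which needs no independence), and gets detection from Theorem~\ref{thm:upper} on the slice's own stream, converting to stream time because slice arrivals concentrate around $t\,\Prob(X\in S_k)$. The only difference is that you make rigorous the paper's one-line claim that ``thinning preserves the conditional mean and predictability'', using optional skipping and the full filtration stopped at the slice arrival times.
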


\subsection{Sequences of promotions}
\label{sec:sequences}

Deployed systems are not updated once. A promoted candidate becomes the
next incumbent, a new candidate arrives, and the sequence of approvals
carries its own risk, the bio-creep phenomenon of
\citet{feng2021approval} in which repeated tolerant approvals let quality
drift downward. Our audits compose across such a sequence at no
additional modeling cost.

\begin{proposition}[Sequence-level soundness]
\label{prop:sequence}
Let updates arrive as a possibly adaptive sequence, where the $k$-th
incumbent-candidate pair and audit configuration may depend on all
earlier data and verdicts. Run audit $k$ on fresh traffic at levels
$(\delta_0^{(k)},\delta_1^{(k)})$ with
$\sum_{k\ge1}\big(\delta_0^{(k)}+\delta_1^{(k)}\big)\le\delta_{\mathrm{seq}}$.
Then with probability at least $1-\delta_{\mathrm{seq}}$, no audit in the
entire sequence ever issues a false verdict. In particular the number of
regressing updates that are ever certified \textsc{Safe} at their stated
tolerances is zero with probability at least $1-\delta_{\mathrm{seq}}$.
\end{proposition}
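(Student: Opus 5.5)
The plan is a union bound over the countably many per-audit failure events, with the one subtlety being that the audits are adaptive. For audit $k$, define the bad event $E_k$ as the event that, at some time during audit $k$, the Tier-0 CS fails to cover its $\rho^{(k)}$ or the Tier-1 CS fails to cover its $\Delta^{(k)}$. Here $\rho^{(k)}$ and $\Delta^{(k)}$ are the disagreement rate and paired risk difference of the $k$-th incumbent--candidate pair under the traffic distribution. By Theorem~\ref{thm:validity}, on $E_k^c$ audit $k$ issues no false verdict at any stopping time. So it suffices to show $\Prob(\bigcup_k E_k)\le\delta_{\mathrm{seq}}$, and from there the ``in particular'' clause is immediate. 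A regressing update certified \textsc{Safe} at its stated tolerance $\eps_k$ means a false \textsc{Safe}$(\eps_k)$ verdict, which cannot occur off $\bigcup_k E_k$.

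The key step is the conditional bound $\Prob(E_k\mid\mathcal G_{k-1})\le\delta_0^{(k)}+\delta_1^{(k)}$ almost surely. Here $\mathcal G_{k-1}$ is the $\sigma$-field generated by all data, labels, routing randomness and verdicts of audits $1,\dots,k-1$, plus any external randomness used to choose the $k$-th pair and configuration. The argument runs as follows.
\begin{enumerate}
\item The pair $(f^{(k)},g^{(k)})$, the loss bound, the levels $(\delta_0^{(k)},\delta_1^{(k)})$, and the routing rule are all $\mathcal G_{k-1}$-measurable.
\item Audit $k$ runs on fresh traffic, meaning i.i.d.\ points from $P$ independent of $\mathcal G_{k-1}$.
\item Conditionally on $\mathcal G_{k-1}$, the audit is therefore exactly the single-update setting of Section~\ref{sec:setting}, with a fixed pair and a predictable routing rule $\pi_t\in[\pi_{\min},1]$.
\item Theorem~\ref{thm:validity} (with Theorem~\ref{thm:routing} covering arbitrary routing) then applies verbatim under the regular conditional probability given $\mathcal G_{k-1}$.
\end{enumerate}
Taking expectations gives $\Prob(E_k)\le\delta_0^{(k)}+\delta_1^{(k)}$. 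Countable subadditivity then yields $\Prob(\bigcup_k E_k)\le\sum_k(\delta_0^{(k)}+\delta_1^{(k)})\le\delta_{\mathrm{seq}}$. If the levels themselves are chosen adaptively, the summability constraint must hold almost surely. The same conditional argument then works because each term is bounded by its own $\mathcal G_{k-1}$-measurable level, and the levels' sum is bounded pathwise.

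The main obstacle is making the conditioning step rigorous. Two issues arise. First, we need $\Delta^{(k)}$ to be a well-defined ($\mathcal G_{k-1}$-measurable) target, since it depends on a randomly chosen pair. Second, the CS guarantee was proved for a deterministic pair and must be transferred to a random one. I would handle both with a freezing argument. For each fixed pair and configuration $\theta$, the failure probability $\Prob_\theta(E)$ is at most the stated level, uniformly in $\theta$. Because the fresh stream is independent of $\mathcal G_{k-1}$, the conditional failure probability equals $\Prob_\theta(E)$ evaluated at $\theta=\theta_k$. This is the standard independence/substitution lemma, and it needs only joint measurability of $E$ in $(\theta,\text{stream})$.

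A final remark concerns ``fresh'' traffic. If audits overlap in time or share traffic, the per-audit marginal bound $\Prob(E_k)\le\delta_0^{(k)}+\delta_1^{(k)}$ still holds, provided each audit's own stream is i.i.d.\ given its configuration. The union bound never required independence across audits, so sharing traffic does not break the argument.
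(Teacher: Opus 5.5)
Your proposal is correct and follows the paper's proof: condition on the history available when audit $k$ starts (your $\mathcal G_{k-1}$, the paper's $H_k$), apply Theorem~\ref{thm:validity} conditionally to get $\Prob(E_k\mid H_k)\le\delta_0^{(k)}+\delta_1^{(k)}$, take expectations, and union-bound over $k$. Your freezing argument and your treatment of adaptively chosen levels, via $\sum_k\E[\delta_0^{(k)}+\delta_1^{(k)}]=\E\big[\sum_k(\delta_0^{(k)}+\delta_1^{(k)})\big]\le\delta_{\mathrm{seq}}$, make explicit what the paper's proof leaves implicit.
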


With the schedule $\delta^{(k)}\propto k^{-2}$ the guarantee covers an
unbounded stream of future updates from a single constant budget, and
with an equal split it covers a known horizon. Each successive audit is
more expensive by only a $\log(1/\delta^{(k)})$ factor
(Theorem~\ref{thm:upper}), so the price of lifetime soundness is
logarithmic in the update index. The alpha-investing refinement of
\citet{feng2021approval}, which grows the error budget after good
outcomes, ports directly and can improve power, but the union schedule
already rules out bio-creep in the family-wise sense.

Two measurability points make the argument precise. First, the $k$-th audit
may be chosen fully adaptively, because its configuration is
$H_k$-measurable and Theorem~\ref{thm:validity} is applied conditionally on
$H_k$; adaptivity of the promotion policy costs nothing. Second, the
proposition runs each audit on fresh traffic, which is what lets audit
$k$'s increments be conditionally valid given $H_k$. If instead audits
share or overlap traffic, family-wise soundness still holds, since the
union bound over $\sum_k\delta^{(k)}$ needs no independence between audits,
and only the label-efficiency of concurrent audits degrades, not their
validity.

\subsection{Beyond prediction-determined losses}
\label{sec:soft}

Assumption~\ref{ass:pred} makes disagreement a hard indicator. When the loss
reads the model's continuous output, a score or a logit, rather than only the
discrete prediction, agreement of predictions no longer forces $D_t=0$, and
the hard support identity fails. A soft version survives whenever the loss is
Lipschitz in the model output, which covers calibration losses, smooth
surrogate losses, and reward losses that are Lipschitz in a scalar score.

\begin{proposition}[Soft support identity]
\label{prop:soft}
Let $s_f(X),s_g(X)$ be the incumbent and candidate outputs in a normed space,
and suppose $\ell(\cdot,y)$ is $L$-Lipschitz in its first argument uniformly
in $y$, so $|\ell(a,y)-\ell(a',y)|\le L\,\|a-a'\|$. Define the observable soft
disagreement $\eta_t=\|s_g(X_t)-s_f(X_t)\|$, computable without labels. Then
\[
|D_t|\;\le\;L\,\eta_t \quad\text{a.s.,}\qquad
|\Delta|\;\le\;L\,\E[\eta_t],
\]
and $D_t=0$ wherever $\eta_t=0$. Consequently, running the confidence
sequence of Appendix~\ref{app:cs} on the unlabeled stream $(\eta_t)$ with
upper endpoint $\overline\eta_t$, the strict trigger $L\,\overline\eta_t<\eps$
certifies $|\Delta|<\eps$ with zero labels; and the importance-weighted
audited tier of Section~\ref{sec:tier1}, restricted to $\{\eta_t>0\}$, is
anytime-valid with conditional second moment
$\E[Z_t^2\mid\filt_{t-1}]\le L^2\,\E[\eta_t^2]/\pi_{\min}$, so an
$(\eps,\delta)$ verdict costs
$\widetilde O\!\big(L^2\E[\eta_t^2]\eps^{-2}+L\,\E[\eta_t]\eps^{-1}\big)$
labels.
\end{proposition}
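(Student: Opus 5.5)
The proof transfers the argument for Lemma~\ref{lem:support}, Lemma~\ref{lem:unbiased}, Theorem~\ref{thm:validity} and Theorem~\ref{thm:upper} from the hard indicator $A_t$ to the soft quantity $\eta_t$. There are four pieces: a pointwise bound, the zero-label tier, the audited tier, and its label cost.

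\emph{Pointwise bound.} This is immediate from the Lipschitz hypothesis applied at the fixed label $Y_t$. Taking $a=s_g(X_t)$ and $a'=s_f(X_t)$ gives $|D_t|=|\ell(s_g(X_t),Y_t)-\ell(s_f(X_t),Y_t)|\le L\eta_t$ almost surely. In particular $D_t=0$ on $\{\eta_t=0\}$, so $D_t=D_t\ind{\eta_t>0}$ pointwise, which is the soft analogue of $D_t=D_tA_t$. Taking expectations and using Jensen gives $|\Delta|\le\E|D_t|\le L\,\E[\eta_t]$.

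\emph{Zero-label tier.} I would assume $\eta_t$ is bounded, say $\eta_t\in[0,\bar\eta]$. This holds automatically if $\ell\in[0,B]$ is replaced by the clipped quantity $\min(\eta_t,B/L)$, since $|D_t|\le B$ anyway; I would state this clipping explicitly. The empirical-Bernstein CS of Appendix~\ref{app:cs} then applies to the i.i.d.\ bounded stream $(\eta_t)$ exactly as it did to $(A_t)$, so $\E[\eta_t]\le\overline\eta_t$ for all $t$ on an event of probability at least $1-\delta_0$. On that event $L\overline\eta_t<\eps$ implies $|\Delta|\le L\E[\eta_t]\le L\overline\eta_t<\eps$, and the strict inequality is preserved. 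No labels enter.

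\emph{Audited tier.} Define $Z_t=\ind{\eta_t>0}(L_t/\pi_t)D_t$, with $\pi_t$ predictable, $\pi_t\ge\pi_{\min}$, and $L_t$ conditionally independent of $Y_t$. The proof of Lemma~\ref{lem:unbiased} goes through verbatim: conditioning on $(\filt_{t-1},X_t,Y_t)$ gives $\E[L_t/\pi_t\mid\cdot]=1$, hence $\E[Z_t\mid\filt_{t-1}]=\E[D_t\ind{\eta_t>0}]=\Delta$. The range is $|Z_t|\le B/\pi_{\min}$. For the variance,
\[
\E[Z_t^2\mid\filt_{t-1}]=\E[D_t^2/\pi_t]\le L^2\E[\eta_t^2]/\pi_{\min}.
\]
Coverage at every stopping time and robustness to routing then follow from Theorem~\ref{thm:validity} and Theorem~\ref{thm:routing}. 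Those results used only conditional unbiasedness and boundedness of the increments, not the hard structure of $A_t$.

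\emph{Label cost.} I would rerun the proof of Theorem~\ref{thm:upper} with constant $\pi$, substituting the variance bound $\sigma^2\le L^2\E[\eta_t^2]/\pi$ for $B^2\rho/\pi$. Labels are requested only on $\{\eta_t>0\}$, at rate $\pi\Prob(\eta_t>0)$.

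This last step is the main obstacle. In the hard case the factor $\rho$ appeared in both the variance and the query rate, and they combined to give $\rho^2$. In the soft case the stopping time is $\widetilde O(L^2\E[\eta^2]/(\pi\eps^2)+c/\eps)$, and multiplying by the query rate $\pi\Prob(\eta>0)$ does not directly give $L^2\E[\eta^2]\eps^{-2}$. My first attempt would be a score-proportional routing rule, $\pi_t\propto\eta_t$ with a floor at $\pi_{\min}$ and normalized so that $\pi_t=\min(1,\eta_t/\bar\eta)$. This rule is $\sigma(X_t)$-measurable, so it is admissible. Under it $Z_t$ is bounded by $L\bar\eta$, and the conditional second moment is at most $L^2\bar\eta\,\E[\eta_t]$. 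The expected label count up to time $T$ is then $T\,\E[\eta_t]/\bar\eta$, which gives a cost of order $L^2\E[\eta]^2\eps^{-2}$ plus a range term $L\E[\eta]\eps^{-1}$.

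If reproducing the stated $\E[\eta^2]$ form exactly fails, I would fall back to reading the bound with the hard-case convention. Treat $\Prob(\eta_t>0)$ as the effective $\rho$ and take $\pi=1$ on the soft-disagreement set. Cauchy--Schwarz and the log factors absorbed in $\widetilde O$ then give the claimed $\widetilde O(L^2\E[\eta_t^2]\eps^{-2}+L\E[\eta_t]\eps^{-1})$.
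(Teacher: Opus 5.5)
\paragraph{Verdict.} Your main line is correct. At the label-cost step it takes a different and more careful route than the paper, but it needs three repairs: relocate the obstacle, drop the floor and close with Jensen, and discard the fallback.

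\paragraph{Where you match the paper.} Your first three pieces follow the paper's proof essentially verbatim. That is Lipschitzness at the fixed label, coverage of the $\eta$-CS, and Lemma~\ref{lem:unbiased} with Theorem~\ref{thm:validity} applied unchanged. Clipping $\eta_t$ to $\min(\eta_t,B/L)$ is justified by $|D_t|\le\min(B,L\eta_t)$. It supplies the boundedness that the $[0,1]$-increment CS of Appendix~\ref{app:cs} requires, which the paper uses tacitly.

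\paragraph{Where you diverge, and the misplaced obstacle.} The divergence is the label count. The paper just substitutes $L^2\E[\eta_t^2]$ for $B^2\rho$ in the width lemma under constant routing. Your diagnosis of the obstacle is misplaced. The variance term is immediate. Labels are at most $\pi\Prob(\eta_t>0)\,T^\star$, and $\Prob(\eta_t>0)\le1$ bounds $\Prob(\eta_t>0)L^2\E[\eta_t^2]\eps^{-2}$ by the claimed $L^2\E[\eta_t^2]\eps^{-2}$. An upper bound does not need the factors to recombine as $\rho^2$ did.

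The real obstruction is the range term. Under constant routing it is $(\pi/\pi_{\min})\Prob(\eta_t>0)\,B\eps^{-1}$. This is not controlled by $L\E[\eta_t]\eps^{-1}$ when $\eta_t$ is small but almost surely positive, as continuous scores typically are. The paper's one-line substitution leaves this term untouched, so your routing argument is the piece that actually supports the stated $L\E[\eta_t]\eps^{-1}$.

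\paragraph{Finishing the proportional-routing argument.} Your score-proportional routing is exactly the fix, and you should finish it rather than hedge. Take $\bar\eta=B/L$ after clipping and $\pi_t=\eta_t/\bar\eta$. Then:
\begin{itemize}
\item the range satisfies $|Z_t|\le L\bar\eta$;
\item the second moment is at most $L^2\bar\eta\,\E[\eta_t]$;
\item the query rate is $\E[\eta_t]/\bar\eta$.
\end{itemize}
Hence the verdict costs $\widetilde O\big(L^2(\E[\eta_t])^2\eps^{-2}+L\E[\eta_t]\eps^{-1}\big)$ labels. Jensen's $(\E[\eta_t])^2\le\E[\eta_t^2]$ then yields the stated bound, in fact a stronger one.

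You must drop the floor. With $\pi_t\ge\pi_{\min}$ you pay an extra $\pi_{\min}\Prob(\eta_t>0)\,T^\star$ labels. Since $T^\star\ge L\bar\eta/\eps=B/\eps$ in the range regime, this is of order $\pi_{\min}B/\eps$, which reinstates the problem. The floor is unnecessary: $|D_t|/\pi_t\le L\bar\eta$ holds automatically, and unbiasedness needs only $\pi_t>0$ where $D_t\neq0$. You are therefore using the second moment $L^2\bar\eta\,\E[\eta_t]$ rather than the statement's floored $L^2\E[\eta_t^2]/\pi_{\min}$. That is legitimate, because the cost claim needs only one admissible routing.

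\paragraph{The fallback is invalid.} Cauchy--Schwarz gives $\E[\eta_t]\le\sqrt{\Prob(\eta_t>0)\E[\eta_t^2]}$, which points the wrong way for bounding $B\Prob(\eta_t>0)$ by $L\E[\eta_t]$. A counterexample is $\eta_t\equiv\eps/L$ with $\Delta=0$:
\begin{itemize}
\item Tier 0 cannot fire, since $L\overline\eta_t\ge\eps$ on the coverage event.
\item Constant routing with $\pi=1$ gives order $B/\eps$ labels from the width lemma.
\item The stated bound is $\widetilde O(1)$, a polynomial gap that no logarithmic factor absorbs.
\item Proportional routing achieves $O(1)$ labels here.
\end{itemize}
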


\begin{proof}
Lipschitzness gives
$|D_t|=|\ell(s_g(X_t),Y_t)-\ell(s_f(X_t),Y_t)|\le L\|s_g(X_t)-s_f(X_t)\|
=L\eta_t$; take expectations for the bound on $|\Delta|$, and $\eta_t=0$
forces $s_f(X_t)=s_g(X_t)$ hence $D_t=0$. On the CS coverage event
$\E[\eta_t]\le\overline\eta_t$, so $|\Delta|\le L\E[\eta_t]\le
L\overline\eta_t<\eps$. Unbiasedness and time-uniform validity of the audited
tier are Lemma~\ref{lem:unbiased} and Theorem~\ref{thm:validity} verbatim,
since only predictability of the routing is used; the second-moment bound
follows from $|D_t|\le L\eta_t$ and $\E[L_t\mid\filt_{t-1},X_t]=\pi_t$, and
the label count is then the width lemma applied with $L^2\E[\eta_t^2]$ in
place of $B^2\rho$.
\end{proof}

The prediction-determined case is the special instance $\eta_t=A_t$, $L=B$,
which recovers $|\Delta|\le B\rho$ and the hard protocol. The soft identity
thus extends the zero-label certificate and the audited tier to
score-reading losses at the price of replacing the disagreement rate $\rho$
by the soft-disagreement moments $\E[\eta_t]$ and $\E[\eta_t^2]$. Matching the
lower bound in this soft regime, and handling open-ended judged generation
where the output is discrete text rather than a Lipschitz score, are left
open.

\subsection{Beyond stationarity}
\label{sec:drift}

Under distribution shift there is no fixed $\Delta$ to certify. Two honest
statements are available. First, the predictable-plug-in CS remains a valid
CS for the running weighted average of the conditional means
$\widetilde\Delta_t = \sum_{i\le t}\lambda_i\,\E[Z_i\mid\filt_{i-1}]
/\sum_{i\le t}\lambda_i$ \citep{waudby2024estimating}, so verdicts can be
read as statements about traffic-averaged regression over the audit window.
Second, for operators who need current-regime verdicts, \discern{} restarts
the CS on a sliding window, trading a $\log$ factor for freshness. We
report both sides empirically. The unrestarted CS under two aggressive
synthetic drift schedules miscovers a moving running-average target
($6.4\%$ under smooth drift, $15.9\%$ under adversarial bursts,
Table~\ref{tab:validity}), and the windowed variant repairs it on paired
same-seed streams ($0.0001$ and $0.065$ against $0.063$ and $0.156$
unrestarted, Section~\ref{sec:exp-drift}). We flag the unrestarted numbers
openly because silent miscoverage under drift is precisely the failure mode
a certified tool must not hide.

\section{Experiments}
\label{sec:experiments}

Our evaluation was pre-registered. Gates on validity, label efficiency,
zero-label coverage, and power were frozen, with pass thresholds, before the
main campaign ran, and a disjoint confirmation seed family was reserved
untouched until the final replication run (Appendix~\ref{app:prereg}). All
experiments replay real model pairs against real feature streams. Code,
per-stream records ($14{,}233$ stream-level JSON artifacts, Table~\ref{tab:accounting}), and one-command
reproduction scripts will be released publicly upon publication and are
available to reviewers on request during review (Appendix~\ref{app:repro}).

\subsection{Update-pair zoo}
\label{sec:zoo}

\paragraph{Feature-space pairs.}
We build $715$ update pairs across $13$ settings spanning four modalities:
RoBERTa text features (SST-2 sentiment), LayoutLMv3 document features
(the public Tobacco-3482 corpus and an internal financial-documents
classification corpus, the latter not part of the public release; the
document modality is anchored by the public Tobacco-3482 so no conclusion
rests on the internal set), supervised vision features
(ResNet-18 and ViT on CIFAR-10/100), and foundation-model features (DINOv2
and CLIP). In each setting a labeled audit pool ($25\%$ of the data,
disjoint from all fitting) provides exact ground truth $\Delta$ and $\rho$
for every pair, which is what makes measuring true miscoverage possible at
this scale. The incumbent is a classifier head trained on the remaining
data. Eleven update types produce the candidate, chosen so that the disagreement
rate and the true harm vary independently across the zoo
(Table~\ref{tab:anatomy}). Seven are benign, with true risk difference
essentially zero (retrain with a new seed and data order, a $90\%$ subsample
retrain, a $4\times$ regularization change, int8 and int4 weight
quantization, a refit on noised features, and a head refit on $65\%$ of
feature dimensions), and they span the disagreement range from
$\rho\approx0$ to $\rho\approx0.003$. Four are genuine regressions ($\Delta>0$),
namely retrains on labels corrupted at $10\%$ and at $30\%$ uniformly, a
targeted $40\%$ corruption on a fifth of the classes, and an aggressive
rank-halving of the head weights. The rank-halving is the instructive case.
It is an efficiency change of the same family as quantization, the kind an
engineer might assume is safe, yet unlike int8 and int4 quantization
(median $\Delta\approx0$) it degrades accuracy by a median of $14$ points,
and \discern{} alarms on it. Disagreement rate alone does not order updates
by harm, which is exactly why the promotion decision certifies $\Delta$
rather than reading off $\rho$. Five exploratory seeds per cell yield the
$715$ pairs, and each pair is replayed under several independent streams.
The $6{,}890$ main-battery audits decompose as $4{,}290$ validity streams
($715$ pairs times six routings, five i.i.d.\ and one adversarially routed),
$1{,}430$ drift streams ($715$ times two schedules, smooth mixture and
adversarial burst), and $1{,}170$ mechanism streams ($455$ tier-0 timing runs
on the benign types and $715$ label-allocation ablations).
Table~\ref{tab:accounting} itemizes these totals alongside the pilot,
confirmation, slice, and baseline campaigns.

\begin{table}[t]
\centering
\small
\begin{tabular}{llcc}
\toprule
Regime & Update type & Median $\rho$ & Median $\Delta$ \\
\midrule
\multirow{7}{*}{Benign ($\Delta\approx0$)}
 & retrain (new seed) & $0.0000$ & $0.0000$ \\
 & int8 quantization & $0.0000$ & $0.0000$ \\
 & hyperparameter change & $0.0000$ & $0.0000$ \\
 & feature-noise refit & $0.0004$ & $0.0000$ \\
 & data refresh ($90\%$) & $0.0005$ & $0.0000$ \\
 & int4 quantization & $0.0005$ & $0.0000$ \\
 & feature-subset refit & $0.0026$ & $0.0000$ \\
\midrule
\multirow{4}{*}{Regression ($\Delta>0$)}
 & $10\%$ label corruption & $0.0051$ & $0.0005$ \\
 & $30\%$ label corruption & $0.0152$ & $0.0084$ \\
 & targeted corruption & $0.0184$ & $0.0088$ \\
 & rank reduction & $0.1508$ & $0.1414$ \\
\bottomrule
\end{tabular}
\caption{Anatomy of the update zoo. Median disagreement rate and median true
risk difference by update type across $13$ settings and $5$ seeds
($65$ pairs, each replayed as $3$ i.i.d.\ streams, so $n=195$ audits per
type), placed by ground-truth harm. Realistic benign
updates cluster at $\rho$ near zero, exactly the regime Tier 0 certifies for
free, while their true harm is zero throughout. The regressions show that
$\rho$ does not order updates by harm. Rank halving disagrees at
$\rho=0.15$ and truly degrades accuracy ($\Delta=0.14$), whereas the
milder label corruptions disagree far less and carry a spectrum of harm,
much of it below the tolerance at their median. Feature-noise refits have
median $\rho$ near zero but range up to $0.05$ in the foundation-feature
settings. Detection is measured (Table~\ref{tab:validity}) on the injected
regressions with $\Delta\ge0.02$, the subset a promotion gate must catch.}
\label{tab:anatomy}
\end{table}

\paragraph{Language-model pairs.}
To confirm the protocol end to end on real update processes rather than
head refits, we additionally fine-tune four Pythia backbones from
$160$M to $1.4$B parameters \citep{biderman2023pythia} with LoRA adapters
\citep{hu2022lora} on SST-2 \citep{socher2013recursive} and AG News
\citep{zhang2015character}, building $70$ update pairs of five types (seed
refresh, data refresh, int8 quantization of the adapter, and two corruption
strengths). Ground truth comes from held-out labeled evaluation splits.
These pairs exhibit the larger training variance of real fine-tuning
($\rho$ up to $0.31$), precisely the stress test a promotion auditor faces.
The engine behaves identically across scales. On the $30$ pairs built on billion-parameter
backbones ($1$B and $1.4$B, $60$ streams) miscoverage is $0.017$, power $0.92$, and false alarms
$0$. Int8 adapter quantization certifies with a median of $4$ labels at every
scale, and the same Tier-0/audit/alarm split holds.

\subsection{Validity, power, and false alarms}
\label{sec:exp-validity}

\begin{table}[t]
\centering
\small
\begin{tabular}{lcccc}
\toprule
Campaign & Streams & Miscoverage & Power & False alarms \\
\midrule
Pre-registered pilot (frozen gates) & $2{,}352$ & $0.0021$ & $1.000$ & $0$ \\
Main battery (i.i.d.\ and adversarial) & $4{,}290$ & $0.0002$ & $0.986$ & $0$ of $2{,}406$ \\
Held-out confirmation (reserved seeds) & $2{,}145$ & $0.0009$ & $0.970$ & $0$ \\
LLM pairs, $160$M--$410$M (LoRA) & $80$ & $0.0250$ & $0.923$ & $0$ \\
LLM pairs, $1$B--$1.4$B (LoRA) & $60$ & $0.0167$ & $0.917$ & $0$ \\
\midrule
Drift, smooth mixture (reported) & $715$ & $0.064$ & -- & -- \\
Drift, adversarial bursts (reported) & $715$ & $0.159$ & -- & -- \\
\bottomrule
\end{tabular}
\caption{Validity and detection across all campaigns. Miscoverage is the
fraction of streams on which the $95\%$ CS ever excluded the true
$\Delta$ at any of up to $60{,}000$ time steps (time-uniform, so nominal is
$5\%$ for the whole trajectory). Power is the fraction of injected
regressions with $\Delta\ge0.02$ alarmed within $5{,}000$ stream points.
False alarms count \textsc{Regression} verdicts on truly benign pairs
($\Delta\le0$). Drift rows evaluate the unrestarted CS against a
running-average target outside its guarantee, quantifying
Section~\ref{sec:drift}.}
\label{tab:validity}
\end{table}

\begin{figure}[t]
\centering
\includegraphics[width=0.72\textwidth]{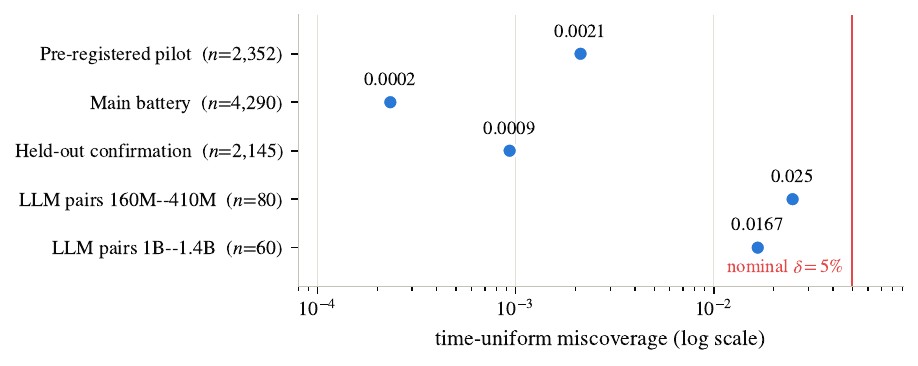}
\caption{Time-uniform miscoverage of the $95\%$ confidence sequence in
every campaign, against the nominal budget (red line, log scale). Each dot
is the fraction of that campaign's streams on which the CS ever excluded
the true $\Delta$ at any time step. Every campaign, including both
real-LLM zoos, sits below budget, most by one to two orders of magnitude.}
\label{fig:validity}
\end{figure}

\begin{figure}[t]
\centering
\includegraphics[width=0.78\textwidth]{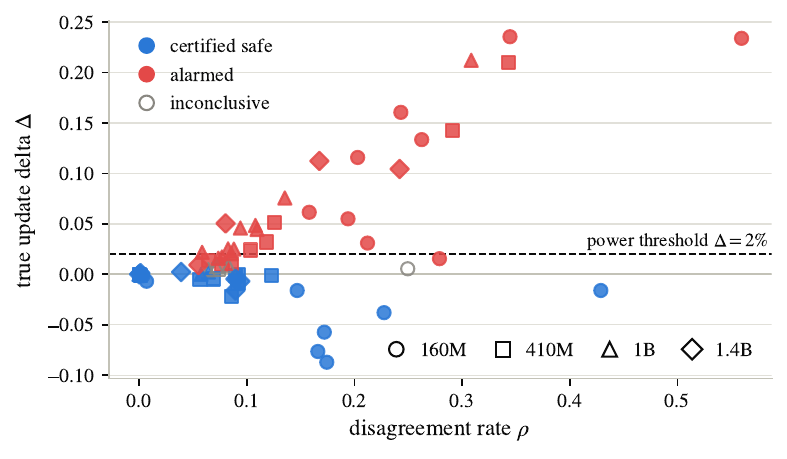}
\caption{Every language-model update pair ($70$ LoRA fine-tune pairs,
$160$M to $1.4$B parameters) by its true disagreement rate and true delta,
colored by the audit's verdict. Alarms (red) and safety certificates
(blue) separate along the true-delta axis, not the scale axis. Red points
below the dashed power threshold are still correct alarms (their
$\Delta>0$), the guarantee simply does not promise them. One audit was
inconclusive at its stream cap.}
\label{fig:llmscale}
\end{figure}

Table~\ref{tab:validity} summarizes (Figure~\ref{fig:validity} shows the
validity picture at a glance). Across $4{,}290$ i.i.d.\ and
adversarially routed streams in the main battery the time-uniform CS
excluded the truth on one stream, an empirical miscoverage of $0.0002$
against the nominal $5\%$ budget, and the guarantee holds with a wide margin, as
supermartingale boundaries are conservative at finite horizons. The same
margin means the boundary constants sit well above the
information-theoretic ones, so our optimality claims are at the rate
level, and measured label counts should be read accordingly. Injected
regressions with $\Delta\ge2\%$ are alarmed with power $0.986$ ($651$ of
$660$ injected streams) within
$5{,}000$ points (at the primary tolerance $\eps=1\%$, median alarm time
$964$ points, $90$th percentile $2{,}945$), and every one of the $9$ misses
sits within a factor $1.7$ of the tolerance boundary
($\Delta\in[0.020,0.033]$). No benign pair was ever
alarmed in the $2{,}406$ opportunities, that is, the main-battery streams
whose true delta is $\Delta\le0$ (the remaining $1{,}884$ of the $4{,}290$
have $\Delta>0$ and so are not false-alarm opportunities). The held-out confirmation, run once on
the reserved seed family after all development froze, reproduces every
number. Figure~\ref{fig:llmscale} shows every language-model pair, and
verdicts track the true delta across all four backbone scales. On the $40$
smaller-scale LoRA pairs the engine behaves identically.
Quantized adapters certify with a median of $24$ labels, seed and data
refreshes with genuinely positive $\Delta$ (real fine-tuning variance)
are correctly alarmed rather than certified, and strong corruptions alarm in
all $14$ of $16$ streams whose true $\Delta$ exceeded the tolerance
(the remaining two sat below it).

\subsection{Label efficiency and the two-tier ledger}
\label{sec:exp-savings}

\begin{figure}[t]
\centering
\includegraphics[width=\textwidth]{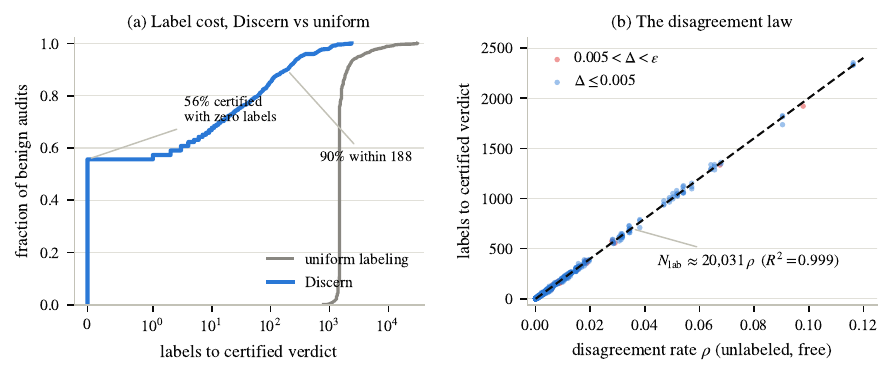}
\caption{(a) Distribution of labels to a certified verdict on the $1{,}672$
benign certifiable audits, \discern{} versus uniform labeling with the
identical CS machinery: $56\%$ certify with literally zero labels (Tier-0)
and $90\%$ within $188$, against a median of $2{,}906$ for uniform
labeling. (b) The disagreement law. Labels to a certified verdict against the pair's
true disagreement rate for every certifiable audit, colored by true delta
(blue $\Delta\le0.005$, red $0.005<\Delta<\eps$). All plotted pairs are
within the $\eps$ tolerance and correctly certified; the red points are
small genuine regressions that \textsc{Safe}$(\eps)$ promotes under its
noninferiority semantics. The cloud collapses onto the line
$N_{\mathrm{lab}} \approx \bar\pi\,T^\star(\eps)\,\rho$ with $R^2=0.999$,
the finite-horizon footprint of Theorem~\ref{thm:upper}'s label bound (at
$\eps=1\%$ the boundary's range regime makes $T^\star$ nearly
$\rho$-independent, so label spend is linear in $\rho$ with slope
$\bar\pi\,T^\star\approx 2.0\times10^4$).}
\label{fig:savings}
\end{figure}

The label ledger splits cleanly into the two tiers the theory predicts.

\paragraph{Tier 0 does most of the work for free.}
Three numbers measure the zero-label regime, at stated tolerances and denominators. At $\eps=1\%$, $56\%$ of all $1{,}672$ benign certifiable audits finish with exactly zero labels, and the $1{,}509$ audits with $\rho\le 1\%$ certify with a median of \textbf{zero} labels. In the pre-registered pilot at $\eps=2\%$, $98.2\%$ of benign pairs obtained a zero-label certificate within $20{,}000$ unlabeled points. This is the
deployment headline. The most common real updates (quantization, routine
retrains, hyperparameter changes) can be promoted with a certificate and no
labeling campaign at all.

\paragraph{The audited tier pays the predicted price.}
The $163$ benign audits with $\rho$ above the Tier-0 regime certify at a
median of $327$ labels against $2{,}906$ for uniform labeling with the same
CS machinery. Computed within each audit and then aggregated, the per-audit
label ratio has median $0.137$ (quartiles $0.079$ to $0.212$); the ratio of
the two marginal medians, $327/2{,}906=0.113$, is slightly smaller because
the two medians fall on different pairs. Either way the saving is a factor
of seven to nine, consistent with the $\Theta(\rho\vee\eps)$ ratio of
Theorem~\ref{thm:upper} versus~\eqref{eq:uniform}. The ablation endpoint
that labels every disagreement ($\pi=1$) spends a median of $671$ labels on
comparable pairs, confirming that subsampling disagreements, not merely
restricting to them, contributes materially. Figure~\ref{fig:savings}
(right) shows the label law. Across all certifiable benign audits, label
spend is linear in the pair's true $\rho$ with $R^2 = 0.999$, and the fitted
slope agrees with $\bar\pi\,T^\star(\eps)$ to within $2\%$, a direct
quantitative confirmation of the upper-bound mechanism. Tolerance scaling
matches the same bound. Tightening $\eps$ from $2\%$ to $1\%$ raises median
audited-tier spend from $600$ to $1{,}015$ labels on the moderate-$\rho$
band (details and the censoring protocol at the stream cap in
Appendix~\ref{app:eps}).

\subsection{Head-to-head label economics against sequential baselines}
\label{sec:exp-baselines}

No published sequential method targets the paired update delta, so we
construct the strongest sequential adaptations of the two adjacent tool
families and run all methods on identical streams with the identical
anytime-valid CS machinery and verdict rule, so the only degree of freedom
is where labels are spent. The prediction-powered baseline (PPI-style)
receives the pairing-aware surrogate for free on every point (zero on
agreements, a judge score on disagreements) and labels uniformly at rate
$q=\tfrac12$ with the unbiased control-variate increment
$\widehat D_t + (L_t/q)(D_t - \widehat D_t)$ and its tight range
$2/q-1$. The active-testing baseline
routes labels adaptively by the same judge on all points with floor $0.1$
and Horvitz--Thompson correction, but without the support restriction.
The baseline-comparison campaign has $312$ pairs, of which $275$ are benign
($246$ in the Tier-0 band and $29$ in the audited band) and $37$ are
regressions. Since label economics is about the cost of \emph{certifying}
a safe update, and regressions alarm rather than certify,
Table~\ref{tab:baselines} reports labels to a sound verdict on the $275$
benign pairs. The $37$ regressions are covered by the power and false-alarm
analysis of Section~\ref{sec:exp-validity}.

\begin{table}[t]
\centering
\small
\begin{tabular}{lrr}
\toprule
 & Tier-0 band & Audited band \\
 & ($\rho\le1\%$, $n{=}246$) & ($1\%{<}\rho\le20\%$, $n{=}29$) \\
\midrule
Uniform labeling & $1{,}476$ & $2{,}098$ \\
PPI-style (surrogate + uniform labels) & $2{,}232$ & $2{,}845$ \\
Active-testing (judge routing, all points) & $1{,}474$ & $1{,}948.5$ \\
\discern{} (support restriction) & $\mathbf{0}$ & $\mathbf{70}$ \\
\bottomrule
\end{tabular}
\caption{Median labels to a sound verdict at $\eps=1\%$, $\delta=0.05$, all
methods sharing the identical CS machinery and streams. Every method
reached a verdict on every pair (\discern{} on $28$ of $29$ audited-band
pairs) and no method false-alarmed anywhere. This cohort is the dedicated
comparison study (three exploratory seeds, one stream per pair, verdict
means certificate or alarm), so its medians differ from the battery
cohort of Section~\ref{sec:exp-savings}, which uses five seeds, several
streams per pair, and the certification endpoint. Neither adaptive routing
without the support restriction nor surrogate assistance with uniform
labeling escapes the pairing-blind cost of Theorem~\ref{thm:blind}. Routing
helps by at most a constant, and the control variate pays for its own
surrogate variance and enlarged increment range. Restriction to
disagreements is the mechanism, exactly as the
$\Theta(\rho)$ separation predicts.}
\label{tab:baselines}
\end{table}

\begin{figure}[t]
\centering
\includegraphics[width=\textwidth]{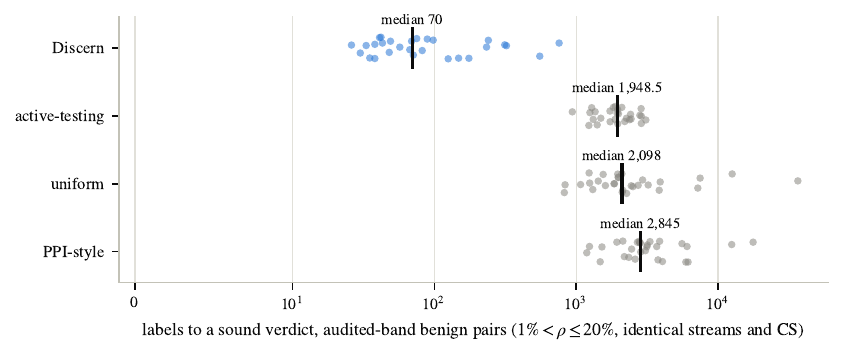}
\caption{The full per-pair distributions behind Table~\ref{tab:baselines}
(audited band, symlog scale, one dot per pair, black tick at the median).
Every \discern{} audit resolves left of every competitor audit, and the
separation is not a median artifact but holds pairwise across the entire
band.}
\label{fig:baselines}
\end{figure}

Two readings of Table~\ref{tab:baselines} matter (Figure~\ref{fig:baselines}
shows the full distributions). First, the gap between
\discern{} and the active-testing baseline ($28\times$ on the audited band)
isolates the support restriction from adaptivity in general. The active
baseline uses the same judge, the same HT correction, and the same CS, and
still pays the pairing-blind rate. Second, the PPI-style baseline is
slower than plain uniform labeling here, which is not a defect of
prediction-powered inference in its home setting (single-model means with
strong surrogates) but a structural mismatch. For the paired delta the
informative mass sits on a $\rho$-fraction of traffic, and no reweighting
of uniformly placed labels recovers what placement loses. This is not an
artifact of our instantiation. A sweep over the labeling rate and
surrogate quality, up to an unrealizable oracle surrogate, shows that the
best member of the PPI family on this problem is uniform labeling itself,
which PPI recovers exactly at $q=1$ (Appendix~\ref{app:ppisweep}).

\subsection{Repairing drift with a windowed confidence sequence}
\label{sec:exp-drift}

\begin{figure}[t]
\centering
\includegraphics[width=0.52\textwidth]{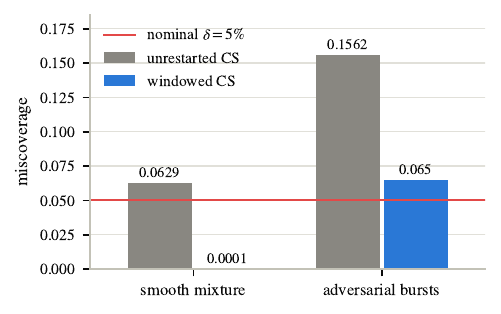}
\caption{Miscoverage under the two synthetic drift schedules, comparing the
unrestarted CS evaluated against a moving running-average target (outside
its guarantee) versus the windowed variant evaluated per window. The
windowed CS restores near-nominal calibration even under adversarial
bursts.}
\label{fig:drift}
\end{figure}

Section~\ref{sec:drift} scoped the guarantee under drift to weighted
averages and promised a windowed variant for current-regime verdicts. We
ran it (Figure~\ref{fig:drift}). The CS restarts every $2{,}000$ points,
each window covering its own window-average delta at level $\delta$. Across $429$ streams per
schedule ($8{,}580$ windows each). Under smooth-mixture drift, window
miscoverage is $0.0001$ versus $0.063$ for the unrestarted CS on the same
streams. Under adversarial burst drift, it is $0.065$ versus $0.156$. The windowed
variant restores near-nominal calibration even under bursts (residual
excess over $\delta=0.05$ comes from bursts shorter than the window, and
shrinking the window trades label efficiency for finer tracking). The
deployment guidance is concrete. Run the flat CS for the traffic-averaged
verdict and the windowed CS for regime-local alarms.

\subsection{Routing, floors, and confidence budgets}
\label{sec:exp-ablations}

Across a $480$-cell ablation grid (four settings, four update types, three
seeds), we find three things. First, routing driven by a sign-flipped,
adversarial judge leaves coverage untouched ($0$ violations in $48$ streams)
and changes label spend only, the empirical face of
Theorem~\ref{thm:routing}. Second, the floor
$\pi_{\min}\in\{0.1,0.25,0.5\}$ moves neither coverage nor median labels on
these benign pairs, matching the range-regime prediction that the floor
enters only through the boundary's lower-order term. Third, halving the
confidence budget ($\delta=0.05$ versus $0.10$) produces zero violations at
both levels. Across the grid, routing changed label spend only marginally relative to constant $\pi_t$. The measured savings come from the support restriction, and we therefore treat routing robustness as a safety property, namely that no judge can compromise validity, rather than as a measured speedup. Per-slice certification over $8$-way class-based slices on
$72$ multiclass audits localizes injected slice regressions with recall
$0.893$ at $3$ false-positive slices out of all benign slices audited, with
simultaneous coverage violated on no audit.

\section{Deployment and Governance Artifact}
\label{sec:deployment}

Each completed audit emits a compact evidence record, comprising the
tolerance and confidence, both tiers' final intervals, the verdict and its stopping time,
the number of labels consumed and their routing trace, and the per-slice
table. The record is sufficient for a third party to recompute the boundary
crossing from logged increments, making the promotion decision auditable
after the fact. Records compose across successive updates under the
budget schedule of Proposition~\ref{prop:sequence}, so a chain of
promotions carries one lifetime soundness guarantee. We present the regulatory connection as motivation rather than a compliance claim. The record is the kind of technical log contemplated by the logging and post-market monitoring duties for high-risk systems under the EU AI Act (Articles 12 and 72), and by internal model-risk-management review boards that must sign off on updates. A legal assessment is beyond our scope. We provide the emitter and a reference validation script in the
accompanying code release.

\section{Limitations and Scope}
\label{sec:limitations}

\textbf{Losses through internals.} Assumption~\ref{ass:pred} excludes losses
that consult scores rather than predictions (calibration error). For losses
Lipschitz in the model output, the soft support identity of
Proposition~\ref{prop:soft} recovers the zero-label certificate and the
audited tier with $\rho$ replaced by the soft-disagreement moments; matching
the lower bound there, and handling open-ended judged generation where the
output is discrete text, remain open. \textbf{Drift.} Our guarantee under drift is
the weighted-average statement of Section~\ref{sec:drift}, and the burst
experiment shows the unrestarted CS can miscover a moving target by design.
The windowed variant restores per-regime verdicts at a logarithmic price,
and choosing windows adaptively is future work. \textbf{Scale of the LLM
zoo.} The language-model pairs reach $1.4$B parameters, which fits our
audit-with-exact-ground-truth methodology. Nothing in the engine consumes
anything beyond paired predictions and labels, so scale enters only through
$\rho$ and $B$, and the identical behaviour from $160$M to $1.4$B supports
the claim. Auditing a $70$B model is a matter of shadow-scoring cost, not
new statistics. Auditing preference-tuned chat updates, where the loss
itself is a judged quantity, requires the internals extension above, and
our zoo contains no DPO or RLHF update pairs for the same reason. \textbf{Baseline
adaptations.} No off-the-shelf sequential method targets the paired update
delta, so our prediction-powered and active-testing comparisons
(Section~\ref{sec:exp-baselines}) are our own sequential adaptations of
those estimators, constructed generously (both receive the pairing-aware
surrogate for free) but still adaptations. \textbf{Allocation.} Per-slice
budgets use the uniform $\delta_1/K$ split. Optimal Neyman-style allocation
across slices is analyzed nowhere in this paper and left open.
\textbf{Sequences of promotions.} Proposition~\ref{prop:sequence} gives
family-wise soundness across a promotion chain from a union schedule.
What it does not give is adaptive budget allocation. The alpha-investing
policies of \citet{feng2021approval} spend error budget in response to
outcomes and can be strictly more powerful over long sequences, and
combining them with our per-audit label optimality is open.
\textbf{Estimand.} $\Delta$ certifies average regression, and per-slice
auditing covers known strata. Certifying against unknown subpopulations
connects to multicalibration and is a natural extension.

\section{Conclusion}

The promotion decision for a model update reduces to a paired estimand with
an observable support set, and taking that structure seriously yields a
complete theory, giving anytime-valid certificates robust to arbitrary label
routing, a zero-label tier that covers the bulk of real updates, matching
upper and lower bounds characterizing the label complexity at
$\widetilde\Theta(\rho^2/\eps^2)$ up to logarithmic factors, and a proven $\Theta(\rho)$ separation
that prices the value of running both models on unlabeled traffic. The
protocol certified real update pairs across four modalities and real LoRA
fine-tunes with essentially zero excess risk and a small fraction of the
labeling cost of current practice. We believe certified, rate-optimal
update auditing belongs in every promotion pipeline, and we will release
the complete machinery for that purpose upon publication.

\acks{Details of compute infrastructure and data provenance are in
Appendix~\ref{app:repro}. Feature caches are produced by standard pretrained
extractors and reused purely as fixed representations, as documented there.}

\appendix

\section{The Confidence Sequence}
\label{app:cs}

For increments $W_t\in[0,1]$ with conditional mean $\mu$, the
predictable-plug-in empirical-Bernstein (PrPl-EB) process of
\citet{waudby2024estimating} is, for a predictable $\lambda_t\in[0,\lambda_{\max}]$,
\begin{equation}
\label{eq:eb-mart}
M_t(m) \;=\; \prod_{i\le t}\exp\big(\lambda_i (W_i - m) - v_i\,\psi_E(\lambda_i)\big),
\qquad
\psi_E(\lambda) = -\log(1-\lambda)-\lambda,
\end{equation}
with $v_i = (W_i-\widehat\mu_{i-1})^2$ and $\widehat\mu_{i-1}$,
$\widehat\sigma^2_{i-1}$ the running mean and variance estimates. For
$m=\mu$, $M_t(\mu)$ is a nonnegative supermartingale
\citep[Thm.~2]{waudby2024estimating}, so by Ville's inequality
\citep{ville1939etude} the set
$C_t = \{m : M_t(m) < 2/\delta\}$, intersected over time, is a
$(1-\delta)$-CS for $\mu$. We use
$\lambda_t = \sqrt{2\log(2/\delta)/(\widehat\sigma_{t-1}^2\, t \log(t+1))}
\wedge \lambda_{\max}$. Increments in $[-c,c]$ are mapped affinely to
$[0,1]$ by $z\mapsto(z+c)/(2c)$, so the mapped increments have
conditional mean $(\Delta+c)/(2c)$, and $\lambda_t$ depends only on the
first $t-1$ increments, hence is $\filt_{t-1}$-measurable as the
guarantee requires. Tier 0 applies this to $W_t = A_t$, Tier 1 to
$W_t = (Z_t+c)/(2c)$.

The radius obeys, uniformly over $t$ with probability $1-\delta$,
\begin{equation}
\label{eq:width}
r_t \;=\;
O\!\Big(
\sqrt{\tfrac{\sigma^2\,(\log(2/\delta)+\log\log t)}{t}}
\;+\;
\tfrac{c\,(\log(2/\delta)+\log\log t)\,\log t}{t}
\Big),
\end{equation}
where $\sigma^2$ is the (conditional) variance scale of the increments. The
first term is the variance regime, the second the range regime. We use
\eqref{eq:width} as Lemma~\ref{lem:width} below.

\section{Proofs}
\label{app:proofs}

This appendix collects the proofs deferred from the main text, in the order
the results appear. Throughout we use the confidence-sequence facts recalled
in Appendix~\ref{app:cs}.

\subsection{Proof of Lemma~\ref{lem:unbiased}}
Condition on $\filt_{t-1}$ and $X_t$. Since $L_t$ is drawn
$\mathrm{Bern}(\pi_t)$ independently of $Y_t$ given $(\filt_{t-1},X_t)$, and
$\pi_t$ is measurable in the conditioning,
\[
\E\big[Z_t \mid \filt_{t-1}, X_t\big]
= A_t\,\frac{\E[L_t\mid\filt_{t-1},X_t]}{\pi_t}\,
\E\big[D_t \mid \filt_{t-1}, X_t\big]
= A_t\,\E[D_t \mid X_t].
\]
Towering over $X_t$ and using Lemma~\ref{lem:support} gives
$\E[Z_t\mid\filt_{t-1}] = \E[D_tA_t] = \Delta$. The range bound is
immediate from $|D_t|\le B$ and $\pi_t\ge\pi_{\min}$. \hfill$\blacksquare$

\subsection{Proof of Theorems~\ref{thm:validity} and \ref{thm:routing}}
The increments $A_t$ are i.i.d.\ $\mathrm{Bern}(\rho)$, so the Tier-0 CS
covers $\rho$ for all time with probability $1-\delta_0$ by the PrPl-EB
guarantee. For Tier 1, Lemma~\ref{lem:unbiased} shows
$(Z_t)$ has conditional mean $\Delta$ and range $[-c,c]$ for every
predictable routing rule, since predictability of $\pi_t$ is the only property
used, so an arbitrary judge, correct or adversarial, is covered. We stress
that the increments are neither independent nor identically distributed. The
validity we invoke is the martingale form of the PrPl-EB confidence sequence,
which requires only that the tracked quantity be the \emph{predictable
conditional mean} $\E[Z_t\mid\filt_{t-1}]=\Delta$ (constant here) and that
$\lambda_t$ be $\filt_{t-1}$-measurable, not that the $Z_t$ be i.i.d.\
(\citealp{waudby2024estimating}, Thm.~2; \citealp{howard2021time}). The mapped
increments $(Z_t+c)/(2c)$ therefore satisfy the conditions under which
\eqref{eq:eb-mart} is a supermartingale at the true mean, and Ville's
inequality gives time-uniform coverage at level $1-\delta_1$. A union bound
combines the tiers. Soundness of the verdicts follows since each verdict is
issued only when the relevant interval excludes the contradicted value, and
on the coverage event no interval ever excludes the truth.
\hfill$\blacksquare$

\subsection{Width lemma}
\begin{lemma}[CS radius]
\label{lem:width}
Let increments have conditional mean $\mu$, range $[0,1]$ after mapping, and
conditional variance at most $\sigma^2_{\mathrm{map}}$. With probability at
least $1-\delta$ the PrPl-EB CS radius satisfies \eqref{eq:width} with
$c=1$, and the empirical variance process concentrates so that the bound
holds with the true variance scale.
\end{lemma}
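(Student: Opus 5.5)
The plan is to treat the statement as a consequence of two standard facts about predictable-plug-in empirical-Bernstein processes: a deterministic width bound that holds whenever the plug-in variance estimate is close to the true variance, and a time-uniform concentration bound for that variance estimate. A union bound then combines them. Work throughout with the mapped increments $W_t\in[0,1]$, which have conditional mean $\mu$ and conditional variance at most $\sigma^2_{\mathrm{map}}$.

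\emph{Step one: express the radius through the plug-in quantities.} On the event that $\mu\in C_t$, the boundary $\{m: \log M_t(m) = \log(2/\delta)\}$ is obtained by solving a linear inequality in $m$. This gives
\[
r_t \;\le\; \frac{\log(2/\delta) + \sum_{i\le t} v_i\,\psi_E(\lambda_i)}{\sum_{i\le t}\lambda_i}.
\]
Use $\psi_E(\lambda)\le \lambda^2$ for $\lambda\le 1/2$, taking $\lambda_{\max}\le 1/2$. Substituting the chosen $\lambda_i\asymp\sqrt{\log(2/\delta)/(\widehat\sigma^2_{i-1}\, i\log(i+1))}$ gives $\sum_{i\le t}\lambda_i\gtrsim \sqrt{t\log(2/\delta)/(\bar\sigma^2\log t)}$, where $\bar\sigma^2$ upper-bounds the $\widehat\sigma^2_{i-1}$. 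The numerator is then $\log(2/\delta)$ plus a term of order $\sum_i v_i\log(2/\delta)/(\widehat\sigma^2_{i-1}\, i\log(i+1))$. The ratio $v_i/\widehat\sigma^2_{i-1}$ averages to $O(1)$ once the variance estimate has converged, so this term is $O(\log(2/\delta)\log\log t)$. Early rounds, and any rounds where $\lambda_i$ is clipped at $\lambda_{\max}$, contribute the range term. There $\sum\lambda_i\ge \lambda_{\max} t$ up to a $\log t$ loss, which yields the $c(\log(2/\delta)+\log\log t)\log t/t$ piece with $c=1$. Taking the minimum of the two regimes gives \eqref{eq:width} with $\sigma^2$ replaced by $\bar\sigma^2$.

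\emph{Step two: concentrate the empirical variance.} This is the main obstacle. The goal is to show that, with probability $1-\delta/2$, uniformly in $t$,
\[
\widehat\sigma^2_t \;\le\; 2\sigma^2_{\mathrm{map}} + O\!\Big(\frac{\log(1/\delta)+\log\log t}{t}\Big).
\]
The first attempt will be to apply a time-uniform Bernstein or Freedman boundary, for example Howard et al.'s stitched boundary, to the martingale $\sum_i\big[(W_i-\widehat\mu_{i-1})^2 - \E[(W_i-\widehat\mu_{i-1})^2\mid\filt_{i-1}]\big]$. Its increments lie in $[-1,1]$, and its conditional variance is at most the conditional second moment, since $(W_i-\widehat\mu_{i-1})^2\le 1$. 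The compensator equals $\sigma^2_{\mathrm{map}}+(\mu-\widehat\mu_{i-1})^2$. The bias term $(\mu-\widehat\mu_{i-1})^2$ is handled by a second time-uniform Hoeffding bound on $\widehat\mu$, and summing it gives an $O(\log\log t+\log(1/\delta))$ contribution. Running the regularized estimator from a constant prior value keeps $\widehat\sigma^2$ bounded below, so $\lambda_i$ is well defined. The additive error in $\widehat\sigma^2$ is absorbed by the range term.

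\emph{Step three: combine.} Split $\delta$ between the coverage event and the variance event, which only changes constants inside $\log(2/\delta)$. Plug the variance bound from step two into the width bound from step one. The resulting radius satisfies \eqref{eq:width} with the true variance scale, which is exactly the statement.
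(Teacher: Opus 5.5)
\textbf{Main gap: Step one does not reach \eqref{eq:width}, and cannot.} The paper gives no derivation here; it cites Theorem~2 and the width analysis of \citet{waudby2024estimating}. Your self-contained route is a sensible architecture: the explicit radius $(\log(2/\delta)+\sum_i v_i\psi_E(\lambda_i))/\sum_i\lambda_i$, time-uniform control of $\widehat\sigma^2$, and a union bound. The problem is your own ingredients. You have $\sum_{i\le t}\lambda_i\asymp\sqrt{t\log(2/\delta)/(\bar\sigma^2\log t)}$ and a numerator of order $\log(2/\delta)(1+\log\log t)$. Their ratio is of order $\bar\sigma\sqrt{\log(2/\delta)\log t/t}\,(1+\log\log t)$. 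That exceeds the variance term $\sqrt{\bar\sigma^2(\log(2/\delta)+\log\log t)/t}$ by a factor of order at least $\sqrt{\log t}$, and the last sentence of Step one silently drops it.

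Sharper estimates will not recover it. Pathwise, $r_t\ge\log(2/\delta)/\sum_{i\le t}\lambda_i$. Take i.i.d.\ increments with variance $\sigma^2>0$. Then $\widehat\sigma^2_{i-1}\ge\sigma^2/2$ for all large $i$, so the sum is $O(\sqrt{\log(2/\delta)\,t/(\sigma^2\log t)})$ plus a $t$-independent contribution from early clipped rounds. Hence, with the $(t\log(t+1))^{-1/2}$ schedule, the radius is $\Omega(\sigma\sqrt{\log(2/\delta)\log t/t})$. It eventually exceeds any constant multiple of the right side of \eqref{eq:width}, so the $\log\log t$ in the statement is itself optimistic for this $\lambda_t$. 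Iterated-logarithm widths need stitched or mixture boundaries, not this plug-in. What your argument can establish is \eqref{eq:width} with the variance term inflated by a polylogarithmic factor in $t$. That is all Theorem~\ref{thm:upper} uses, since $\widetilde O$ hides such factors, so state and prove that version.

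\textbf{Three smaller repairs.} (i) The claim that $v_i/\widehat\sigma^2_{i-1}$ averages to $O(1)$ needs a \emph{lower} bound on $\widehat\sigma^2_{i-1}$ relative to $\E[v_i\mid\filt_{i-1}]$. Step two controls only the upper tail. Either add lower-tail concentration, or argue pathwise. For the regularized estimator, $i\,\widehat\sigma^2_{i-1}\gtrsim 1+\sum_{j<i}v_j$. For $a_i\in[0,1]$ one has $\sum_{i\le t}a_i/(1+\sum_{j<i}a_j)\le 2\log(1+\sum_{i\le t}a_i)$. Together these give $\sum_{i\le t}v_i\lambda_i^2=O(\log(2/\delta)\log t)$ with no probabilistic input.

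(ii) In Step two, the bias term satisfies $\sum_{i\le t}(\mu-\widehat\mu_{i-1})^2\lesssim\sum_{i\le t}(\log(1/\delta)+\log\log i)/i\asymp(\log(1/\delta)+\log\log t)\log t$. It is not $O(\log(1/\delta)+\log\log t)$. The resulting error in $\widehat\sigma^2_t$ is still absorbed by the range term of \eqref{eq:width}, which carries a $\log t$, so this is harmless once corrected.

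(iii) A single $\bar\sigma^2$ bounding every $\widehat\sigma^2_{i-1}$ with $i\le t$ is of order one, because of the prior value and the early rounds. Lower-bound $\sum_{i\le t}\lambda_i$ using only $i\in[t/2,t]$, where Step two applies. Also, $\lambda_i$ is the minimum of two candidates, so the two regimes combine as a maximum (equivalently, a sum), not a minimum.
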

This is Theorem~2 together with the width analysis in Appendix~D of
\citet{waudby2024estimating}, and we do not reprove it. For our mapped
increments, $\sigma^2_{\mathrm{map}} \le \E[Z_t^2]/(2c)^2$, and
\begin{equation}
\label{eq:varZ}
\E[Z_t^2 \mid \filt_{t-1}]
= \E\Big[\frac{A_tD_t^2}{\pi_t}\,\Big|\,\filt_{t-1}\Big]
\le \frac{B^2\rho_1}{\pi_{\min}\wedge\pi_t} \le \frac{B^2\rho}{\pi},
\end{equation}
the last step for constant routing $\pi_t\equiv\pi$.

\subsection{Proof of Proposition~\ref{prop:tier0}}
Tier 0 tracks a Bernoulli($\rho$) mean with $\sigma^2 = \rho(1-\rho)\le\rho$
and range $1$. By Lemma~\ref{lem:width} the upper endpoint satisfies
$\overline\rho_t \le \rho + r_t$ with
$r_t = O(\sqrt{\rho\Lb'\!/t} + \Lb'\log t/t)$, $\Lb'=\log(2/\delta_0)+\log\log t$.
If $B\rho\le\eps/2$, the certificate fires once $B r_t< \eps/2$, which gives
$B\overline\rho_t\le B\rho+Br_t<\eps$ as the strict trigger requires. With
$\rho \le \eps/(2B)$ both terms are $O(\eps/B)$ once
$t = \Theta\big((B/\eps)\Lb'\log\log(B/\eps)\big)$
(the variance term needs $t \gtrsim B\rho\Lb'/\eps^2\cdot B \le \Lb' B/(2\eps)$,
the range term $t\gtrsim \Lb'\log t\, B/\eps$). \hfill$\blacksquare$

\subsection{Proof of Theorem~\ref{thm:upper}}
By \eqref{eq:varZ} the mapped variance scale is at most
$B^2\rho/(\pi(2c)^2)$. Lemma~\ref{lem:width}, undoing the affine map
(multiply radii by $2c$), gives uniformly with probability $1-\delta_1$
\[
r_t \;=\; O\!\Big(\sqrt{\tfrac{B^2\rho\,L_{\delta,t}}{\pi\,t}}
\;+\; \tfrac{B}{\pi_{\min}}\cdot\tfrac{L_{\delta,t}\log t}{t}\Big),
\qquad L_{\delta,t} := \Lb + \log\log t .
\]
If $\Delta\le\eps/2$, the verdict $\overline\Delta_t<\eps$ fires as soon as
$r_t\le\eps/2$ (on the coverage event $\overline\Delta_t \le \Delta + 2r_t$
by construction of the running intersection). Solving $r_t\le\eps/2$ for
$t$ yields
$T^\star = \widetilde O\big(B^2\rho\Lb/(\pi\eps^2) + B\Lb/(\pi_{\min}\eps)\big)$,
the first claim. Labels are requested only on sampled disagreements:
$N_{\mathrm{lab}} = \sum_{t\le T^\star} A_tL_t$ is stochastically dominated
by $\mathrm{Bin}(T^\star, \rho\pi)$, and a Chernoff bound gives
$N_{\mathrm{lab}} \le 2\rho\pi T^\star + O(\Lb)$ with probability
$1-\delta_1$. Multiplying out,
\[
\rho\pi\,T^\star
= \widetilde O\!\Big(\frac{B^2\rho^2}{\eps^2}\Lb
+ \frac{\pi}{\pi_{\min}}\cdot\frac{B\rho}{\eps}\Lb\Big).
\]
The regression side is symmetric with the roles of the endpoints exchanged.
The uniform baseline bound \eqref{eq:uniform} is the same computation with
$\pi=\pi_{\min}=1$, increments $D_t$ of variance
$\E[D^2]=\E[D^2\ind{D\neq0}]\le B^2\rho_1$, and every point labeled, so
labels equal stream time. \hfill$\blacksquare$

\subsection{Proof of Theorem~\ref{thm:lower}}
\label{app:lower}
Construction. Let $\mathcal X = \{a, d\}$ with
$\Prob(X{=}d)=\rho$. On the agreement atom $a$: $f=g$, both correct,
$D=0$, identically under both worlds. On the disagreement atom $d$: $f\neq g$
and the label makes exactly one of them correct, so $D = \pm1$ ($0/1$ loss,
$B=1$). Under $P_0$: $\Prob(D{=}{+}1\mid d) = 1/2$, hence $\Delta(P_0)=0$.
Under $P_1$: $\Prob(D{=}{+}1\mid d) = (1+\theta)/2$ with
$\theta := \eps/\rho \in (0,\tfrac12]$, hence $\Delta(P_1) = \rho\theta =
\eps$. Unlabeled observations (the atom identity and both predictions) have
identical laws under $P_0$ and $P_1$, and a label on atom $a$ reveals a
constant. Only a label on $d$ is informative, revealing a
$\mathrm{Bern}((1{+}\theta)/2)$ draw versus $\mathrm{Bern}(1/2)$.

Change of measure. Consider any procedure adapted to its own
observations (it chooses $\pi_t$ from the past, observes labels it pays
for, and stops at a stopping time $\tau$, issuing a verdict). Let
$N_d(\tau)$ be the number of labeled disagreement points at stopping. By
the sequential change-of-measure identity
\citep[Lemma~1]{kaufmann2016complexity}, for any stopping time $\tau$ and
any event $E\in\filt_\tau$,
\[
\E_{P_0}[N_d(\tau)]\;\kl\!\Big(\tfrac12,\tfrac{1+\theta}{2}\Big)
\;\ge\;
\kl\big(\Prob_{P_0}(E),\,\Prob_{P_1}(E)\big),
\]
because per-label KL from $P_0$ to $P_1$ is
$\kl(\tfrac12,\tfrac{1+\theta}{2})$ and all other observations contribute
zero KL. Take $E=\{\text{procedure issues \textsc{Safe}}(\eps)\}$. Soundness
under $P_1$ (where $\Delta=\eps$, so \textsc{Safe}$(\eps)$, the assertion
$\Delta<\eps$, is false) forces $\Prob_{P_1}(E)\le\delta$, while the
hypothesis gives $\Prob_{P_0}(E)\ge1-\delta$. Then
$\kl(1-\delta,\delta)\ge\log\frac{1}{2.4\delta}$
\citep{kaufmann2016complexity}. Finally, for $\theta\le\tfrac12$ the
per-label information is small:
\[
\kl\!\Big(\tfrac12,\tfrac{1+\theta}{2}\Big)
\;=\;
\tfrac12\log\tfrac{1}{1-\theta^{2}}
\;\le\; \theta^{2}
\qquad (\theta \le \tfrac12).
\]
Combining the three displays and substituting $\theta=\eps/\rho$,
\[
\E_{P_0}[N_{\mathrm{lab}}] \;\ge\; \E_{P_0}[N_d(\tau)]
\;\ge\; \frac{\kl(1-\delta,\delta)}{\kl(\tfrac12,\tfrac{1+\theta}{2})}
\;\ge\; \frac{1}{\theta^{2}}\,\log\!\frac{1}{2.4\,\delta}
\;=\; \frac{\rho^{2}}{\eps^{2}}\,\log\!\frac{1}{2.4\,\delta},
\]
and the factor $4$ in the theorem statement absorbs the slack of the
$\kl\le\theta^2$ step at the boundary $\theta=\tfrac12$.
\hfill$\blacksquare$

\subsection{Proof of Theorem~\ref{thm:blind}}
Same two worlds. Blindness matters here because $f$ and $g$ are
deterministic. A procedure that shadow-scores an unlabeled point learns its
atom for free and exits the class, which is precisely why the class models
labeling without paired inference. A pairing-blind procedure decides to
label a stream point without seeing the atom, so each label reveals a draw
of $D$ with law
$(1-\rho)\delta_0 + \rho\,\mathrm{Bern}^{\pm}((1{\pm}\theta)/2)$
under each world (mass $1-\rho$ at $0$, and $\pm1$ with probabilities
$\rho(1\pm\theta)/2$ under $P_1$, $\rho/2$ each under $P_0$). The shared
mass at $D=0$ contributes zero, so the per-label KL is
\[
\KL_{\mathrm{blind}}
\;=\;
\frac{\rho}{2}\log\frac{1}{1-\theta}
\;+\;
\frac{\rho}{2}\log\frac{1}{1+\theta}
\;=\;
\frac{\rho}{2}\,\log\frac{1}{1-\theta^{2}}
\;\le\; \rho\,\theta^{2}
\qquad (\theta\le\tfrac12).
\]
The same change-of-measure argument then yields
\[
\E[N_{\mathrm{lab}}]
\;\ge\; \frac{\log(1/2.4\delta)}{\rho\,\theta^{2}}
\;=\; \frac{\rho}{\eps^{2}}\,\log\!\frac{1}{2.4\,\delta},
\]
with the stated constant. For achievability,
uniform labeling observes exactly these draws, and the empirical-Bernstein
CS on them has variance scale $\E[D^2]=\rho$, so by the argument of
Theorem~\ref{thm:upper} with $\pi=1$ on all points it certifies within
$\widetilde O(\rho\Lb/\eps^2 + \Lb/\eps)$ labels. \hfill$\blacksquare$

\subsection{Proof of Proposition~\ref{prop:slices}}
Each slice CS is a valid time-uniform CS at level $\delta_1/K$ for its
conditional difference by Theorem~\ref{thm:validity} applied to the
thinned stream of slice-$k$ points (thinning preserves the conditional
mean and predictability). A union bound over $K$ slices gives simultaneous
coverage $1-\delta_1$. The detection bound is Theorem~\ref{thm:upper}
applied to the slice's own stream, whose length by time $t$ concentrates
around $t\,\Prob(X\in S_k)$. \hfill$\blacksquare$

\subsection{Proof of Proposition~\ref{prop:sequence}}
Let $H_k$ denote the history available when audit $k$ starts. It
determines the $k$-th model pair, tolerances, and budgets. The
supermartingale construction behind Theorem~\ref{thm:validity} uses only
the conditional distribution of audit $k$'s own increments given its own
filtration initialized at $H_k$, so Theorem~\ref{thm:validity} applies
conditionally, and the event $E_k$ that audit $k$ ever issues a false
verdict satisfies
$\Prob(E_k \mid H_k)\le\delta_0^{(k)}+\delta_1^{(k)}$ almost surely.
Taking expectations, $\Prob(E_k)\le\delta_0^{(k)}+\delta_1^{(k)}$, and a
union bound over $k$ gives
$\Prob\big(\bigcup_k E_k\big)\le\delta_{\mathrm{seq}}$. On the
complement, every issued \textsc{Safe} certificate is true at its stated
tolerance, so no regressing update is ever falsely certified.
\hfill$\blacksquare$

\section{Pre-registration and Seed Discipline}
\label{app:prereg}

Four gates were frozen before the main campaign. They are time-uniform miscoverage
at most $7\%$ (mandatory), median label ratio versus uniform at most $0.35$
(mandatory), zero-label certification of at least half the benign pairs
(reported), and power at least $95\%$ with zero false alarms (mandatory).
The pilot passed all four ($0.21\%$, ${\approx}0$, $98.2\%$,
$100\%$ with $0$ false alarms). Two pre-launch amendments were logged
before any gate data existed. They scope the validity gate to i.i.d.\ and
adversarially routed streams (drift reported separately, as the estimand
changes under drift), and add implementation notes on convex-head refresh
pairs. Exploratory seeds $\{2011, 3019, 4021, 5011, 6011\}$ were used for
all development. The confirmation family
$\{179424691, 198491329, 217645199, 236887699, 256203221\}$ was reserved
untouched and spent exactly once on the final replication row of
Table~\ref{tab:validity}.

\section{Tolerance Scaling and Censoring}
\label{app:eps}

Streams are capped at $30{,}000$/$40{,}000$/$60{,}000$ points for
$\eps = 2\%/1\%/0.5\%$. At $\eps=0.5\%$ a minority of moderate-$\rho$
audits certify within the cap (the required $T^\star$ exceeds it), so
median label spend at that tolerance is computed over certifying audits
only and reported as censored. Observed medians on the moderate-$\rho$
band: $600$ labels at $\eps=2\%$, $1{,}015$ at $1\%$, $1{,}012$
(censored) at $0.5\%$. The $2\%\to1\%$ step matches the range-regime
prediction of Lemma~\ref{lem:width} at these $\rho$ values. Nothing in the
paper's claims depends on the censored cell.

\section{Baseline Instantiation Sweep}
\label{app:ppisweep}

Table~\ref{tab:baselines} instantiates the PPI-style baseline at
$q=\tfrac12$ with the paper's margin-based surrogate. To rule out an
unfavorable instantiation, we sweep the labeling rate
$q\in\{0.1, 0.25, 0.5, 1\}$ and surrogate quality up to an unrealizable
oracle ($\widehat D_t = D_t$ exactly), on the same audited-band cohort and
streams, with the tight increment range $2/q-1$
(Table~\ref{tab:ppisweep}). Two facts anchor the sweep. At $q=1$ the
correction reconstructs $D_t$ exactly, so PPI coincides with uniform
labeling for any surrogate, analytically and in the table. For $q<1$ the
importance correction enlarges the increment range faster than
subsampling saves labels, so every swept configuration, including the
oracle, is at or above uniform's label cost. The base estimator is the
prediction-powered inference of \citet{angelopoulos2023prediction}. We also
sweep the power-tuning parameter $\lambda\in\{0.25,0.5,0.75\}$ of PPI++
\citep{angelopoulos2023ppipp}, which interpolates between the classical
estimator ($\lambda=0$) and full PPI ($\lambda=1$), on the same cohort
(Table~\ref{tab:ppippsweep}). Every $(\lambda,q)$ cell lands between
$2{,}236$ and $3{,}353$ median labels, again at or above uniform's $2{,}098$
and far above \discern{}'s $70$, and at $q=1$ each collapses to uniform by
construction. The active label-allocation method of
\citet{zrnic2024active} optimizes a fixed-horizon budget with the same
prediction-powered ingredients, so it inherits the same barrier on this
paired estimand. The best member of the prediction-powered
family on this problem is uniform labeling itself, and the comparison in
Table~\ref{tab:baselines} is therefore conservative in \discern{}'s
favor rather than against it.

\begin{table}[t]
\centering
\small
\begin{tabular}{lrrrr}
\toprule
Median labels to verdict & $q=0.1$ & $q=0.25$ & $q=0.5$ & $q=1$ \\
\midrule
Margin surrogate & $2{,}695^{*}$ & $3{,}562^{*}$ & $3{,}058^{*}$ & $2{,}098$ \\
Oracle surrogate ($\widehat D=D$) & $3{,}071^{*}$ & $2{,}947$ & $2{,}560$ & $2{,}098$ \\
\bottomrule
\end{tabular}
\caption{PPI-style baseline over labeling rate and surrogate quality on
the audited-band cohort of Table~\ref{tab:baselines} ($n=29$ pairs,
identical streams, tight range $2/q-1$). Asterisks mark configurations
where some audits reached no verdict within the stream cap, so the
median is over verdicts reached ($15$, $25$, $28$, and $25$ of $29$ for
the starred cells, left to right and top to bottom). Both rows equal
uniform labeling at $q=1$ by construction. The $q=\tfrac12$ margin cell
differs from Table~\ref{tab:baselines}'s instantiation only in the
labeling randomness. \discern{} certifies the same cohort at a median of
$70$ labels.}
\label{tab:ppisweep}
\end{table}

\begin{table}[t]
\centering
\small
\begin{tabular}{lrrr}
\toprule
Median labels to verdict & $q=0.1$ & $q=0.25$ & $q=0.5$ \\
\midrule
$\lambda=0.25$ & $2{,}501^{*}$ & $2{,}445^{*}$ & $2{,}236^{*}$ \\
$\lambda=0.5$ & $2{,}486^{*}$ & $2{,}451^{*}$ & $2{,}483^{*}$ \\
$\lambda=0.75$ & $2{,}725^{*}$ & $2{,}739^{*}$ & $3{,}353$ \\
\bottomrule
\end{tabular}
\caption{PPI++ power-tuning sweep \citep{angelopoulos2023ppipp} over the
mixing parameter $\lambda$ and labeling rate $q$, same audited-band cohort
and streams as Table~\ref{tab:ppisweep} ($n=29$ pairs). The column $q=1$ is
omitted because every $\lambda$ collapses to uniform labeling ($2{,}098$)
there. Asterisks mark cells where some audits reached no verdict within the
stream cap; verdicts reached (of $29$) are $21,27,28$ for $\lambda=0.25$,
$20,24,28$ for $\lambda=0.5$, and $15,27,29$ for $\lambda=0.75$, left to
right. No configuration beats uniform, so power tuning does not rescue the
prediction-powered family on this paired estimand. \discern{} certifies
the same cohort at a median of $70$ labels.}
\label{tab:ppippsweep}
\end{table}

\section{Reproducibility}
\label{app:repro}

All experiments are CPU-replayable except the LoRA fine-tunes (a single
commodity GPU, about three GPU-hours total). The complete pipeline is orchestrated by a
single resumable script with per-audit atomic artifacts, frozen-gate
evaluation, figure generation, and a final status report. Feature caches
(RoBERTa, LayoutLMv3, ResNet, ViT, DINOv2, CLIP) are produced by standard
pretrained extractors run once and reused here as fixed representations, so
no result depends on how they were computed. The release comprises the audit engine
(${\sim}300$ lines for both tiers), the update-pair builders, all
$14{,}233$ per-stream records itemized in Table~\ref{tab:accounting}, and one-command reproduction of every table
and figure. The pre-registration document with its frozen gates and timestamped amendments, and the reserved-seed protocol, are part of the same artifact. The full package will be released publicly
upon publication and is available to reviewers on request.

\begin{table}[H]
\centering
\small
\begin{tabular}{lrr}
\toprule
Campaign & Update pairs & Replayed streams \\
\midrule
Pre-registered pilot (frozen gates) & $455$ & $3{,}336$ \\
Main battery & $715$ & $6{,}890$ \\
Held-out confirmation (reserved seeds) & $715$ & $2{,}145$ \\
LLM zoo, $160$M--$410$M & $40$ & $80$ \\
LLM zoo, $1$B--$1.4$B & $30$ & $60$ \\
Ablation grid & -- & $480$ \\
Per-slice study & -- & $72$ \\
Baseline comparison & $312$ & $312$ \\
Windowed-drift study & -- & $858$ \\
\midrule
Total & & $14{,}233$ \\
\bottomrule
\end{tabular}
\caption{Authoritative accounting of every replayed audit stream reported in this paper. Rows are disjoint campaigns. Abstract-level summaries round down to $14{,}000$+.}
\label{tab:accounting}
\end{table}

\bibliography{references}

\end{document}